\newtheorem{theorem}{Theorem}
\newtheorem{lemma}{Lemma}
\newtheorem{prop}{Proposition}
\newtheorem{myDef}{Definition}
\title{How Does Return Distribution in Distributional Reinforcement Learning Help Optimization?}
\author{%
  Ke Sun, Bei Jiang, Linglong Kong\thanks{Corresponding Author} \\
  Department of Mathematical and Statistical Sciences\\
  University of Alberta\\
  Edmonton, Canada \\
  \texttt{\{ksun6,bei1,lkong\}@ualberta.ca} \\
}
\begin{document}

\maketitle

\begin{abstract}
  Distributional reinforcement learning, which focuses on learning the entire return distribution instead of only its expectation in standard RL, has demonstrated remarkable success in enhancing performance. Despite these advancements, our comprehension of how the return distribution within distributional RL still remains limited. In this study, we investigate the optimization advantages of distributional RL by utilizing its extra return distribution knowledge over classical RL within the Neural Fitted Z-Iteration~(Neural FZI) framework. To begin with, we demonstrate that the distribution loss of distributional RL has desirable smoothness characteristics and hence enjoys stable gradients, which is in line with its tendency to promote optimization stability. Furthermore, the acceleration effect of distributional RL is revealed by decomposing the return distribution. It shows that distributional RL can perform favorably if the return distribution approximation is appropriate, measured by the variance of gradient estimates in each environment. Rigorous experiments validate the stable optimization behaviors of distributional RL and its acceleration effects compared to classical RL. Our research findings illuminate how the return distribution in distributional RL algorithms helps the optimization.
\end{abstract}

\section{Introduction}

Distributional reinforcement learning~\cite{bellemare2017distributional, dabney2017distributional, dabney2018implicit, yang2019fully, nguyen2020distributional,luo2021distributional, sun2022distributional} characterizes the intrinsic randomness of returns within the framework of Reinforcement Learning~(RL). When the agent interacts with the environment, the intrinsic uncertainty of the environment seeps into the stochasticity of rewards the agent receives and the inherently chaotic state and action dynamics of physical interaction, increasing the difficulty of the RL algorithm design. Distributional RL aims to represent the entire distribution of returns to capture more intrinsic uncertainty of the environment and, therefore, to use these return distributions to evaluate and optimize the policy. This is in stark contrast to the classical RL that only focuses on the expectation of the return distributions, such as temporal-difference~(TD) learning~\cite{sutton2018reinforcement} and Q-learning~\cite{watkins1992q}.

Despite the remarkable empirical success of distributional RL, the illumination of its theoretical advantages still needs to be studied. A distributional regularization effect~\cite{sun2021interpreting} stemming from the additional return distribution knowledge has been characterized to explain the superiority of distributional RL over classical RL, but the benefit of the proposed regularization on the optimization of algorithms has not been further investigated. Such a gap inspires us to investigate the optimization impact of distributional RL by leveraging the full return distribution knowledge. However, existing literature~\cite{mei2020global, strehl2009reinforcement} that helps to analyze the optimization of RL learning may not apply to practical distributional RL algorithms as there still remains a gap between the theory and practice in RL.

In this paper, we study the optimization advantages of distributional RL over classical RL. Within the Neural FZI framework, our optimization analysis can not only sufficiently characterize offline distributional RL behaviors but also approximate the online setting. Within this framework, we study the uniform stability of distributional loss based on categorical parameterization. Owing to the smoothness properties of distributional loss, distributional RL algorithms tend to satisfy the uniform stability in the optimization process, thus enjoying stable gradient behaviors in the input space. In addition to the optimization stability, we also elaborate on the acceleration effect of distributional RL algorithms based on the return density decomposition technique proposed recently. Distributional RL can speed up the convergence and perform favorably if the return distribution is approximated appropriately, measured by the gradient estimates' variance. Empirical results corroborate that distributional RL possesses stable gradient behaviors and acceleration effects by suggesting smaller gradient norms concerning the states and model parameters. Our study opens up many exciting research pathways in this domain through the lens of optimization, paving the way for future investigations to reveal more advantages of distributional RL. Our contributions can be summarized as follows:

\begin{itemize}
	\item  We specifically study the optimization advantage of practical distributional RL algorithms with the general function approximators. Within the Neural FZI framework, we can analyze the optimization properties of distributional RL by establishing its connection with supervised learning.
	\item  We reveal the uniform stability of distributional RL thanks to the smoothness properties of distributional loss. By contrast, classical RL may not guarantee such a stable optimization property due to the sensitivity of the least squared loss.
	\item The acceleration effects of distributional RL have also been demonstrated through the return density decomposition. We show that distributional RL can speed up convergence if the parameterization error of the return distribution is appropriate.
\end{itemize}

\section{Related Work}\label{sec:relatedwork}

\noindent \textbf{Interpretation of distributional RL.} Interpreting the behavior difference between distributional and classical RL was initially studied using the coupled updates method in \cite{lyle2019comparative}. They conclude that both distributional and classical RL behave the same in the tabular and linear approximation settings and attribute the superiority of distributional RL to its non-linear approximation. However, the coupled methodology mainly investigated preserving the expectation of return distribution to measure the behavior differences, which rules out other factors, including the optimization effect due to the distributional loss~\cite{imani2018improving}. An implicit risk-sensitive entropy regularization was then revealed in distributional RL by \cite{sun2021interpreting}, without further analyzing its optimization benefits. Our work complements and extends their results through the lens of optimization.

\noindent \textbf{Convergence and Acceleration in RL.} Existing optimization analysis in RL is mainly based on the policy gradient framework. \cite{mei2020global} shows that the policy gradient with a softmax parameterization converges at a $\mathcal{O}(1/t)$ rate, which significantly expands the existing asymptotic convergence results. Entropy regularization~\cite{haarnoja2017reinforcement,haarnoja2018soft} has gained increasing attention and \cite{ahmed2019understanding} provides a fine-grained understanding of the impact of entropy on policy optimization and emphasizes that any strategy, such as entropy regularization, can only affect learning in one of two ways: either it reduces the noise in the gradient estimates or it changes the optimization landscape. The seemingly applicable analysis framework on value-based RL is PAC-MDP~\cite{strehl2009reinforcement}, which effectively analyzes the convergence of typical RL algorithms in the tabular setting. However, it is unclear whether this analysis applies to practical distributional RL algorithms. By contrast, our optimization is within a more interpretable Neural FZI framework and focuses on accelerating the distributional RL algorithm.


\noindent \textbf{Stable Optimization.} Stable optimization is one of the crucial properties for RL algorithms, and common strategies include Batch Normalization~\cite{santurkar2018does},  Spectral Normalization~\cite{miyato2018spectral}, gradient penalty~\cite{gulrajani2017improved}. In RL, stable optimization techniques~\cite{gogianu2021spectral, li2021functional} also benefit the training and the final performance. By contrast, we show that (categorical) distributional RL naturally enjoys stable optimization compared with classical RL.

\section{Preliminary Knowledge}\label{sec:preliminary}

\noindent \textbf{Classical RL.} In a standard RL setting, the interaction between an agent and the environment is modeled as a Markov Decision Process~(MDP) ($\mathcal{S}, \mathcal{A}, R, P, \gamma$), where $\mathcal{S}$ and $\mathcal{A}$ denote state and action spaces. $P$ is the transition kernel dynamics, $R$ is the reward measure and $\gamma \in (0,1)$ is the discount factor. For a fixed policy $\pi$, the return, $Z^{\pi}=\sum_{t=0}^{\infty} \gamma^t R_t$, is a random variable representing the sum of discounted rewards
observed along one trajectory of states while following the policy $\pi$. Classical RL focuses on the value and action-value functions, the expectation of returns $Z^{\pi}$. The action-value function $Q^\pi(s, a)$ is defined as $Q^{\pi}(s, a)=\mathbb{E}\left[Z^{\pi}(s, a)\right]=\mathbb{E}\left[\sum_{t=0}^{\infty} \gamma^t R\left(s_t, a_t\right) \right]$, where $s_0=s$, $a_0=a$, $s_{t+1}\sim P(\cdot|s_t, a_t)$, and $a_t \sim \pi(\cdot|s_t)$. 

\noindent \textbf{Distributional RL.} Distributional RL, on the other hand, focuses on the action-state return distribution, the entire distribution of $Z^{\pi}(s, a)$ rather than only its expectation, i.e., $Q^\pi(s, a)$. Leveraging knowledge on the entire return distribution can better capture the uncertainty of returns and thus can be advantageous to explore the intrinsic uncertainty of the environment~\cite{dabney2018implicit,mavrin2019distributional}. Therefore, the scalar-based classical Bellman update is extended to the distributional Bellman update, which allows a flurry of distributional RL algorithms.

\noindent \textbf{Categorical Distributional RL~(CDRL).} As the first successful distributional RL family, CDRL~\cite{bellemare2017distributional} approximates the action-state return distribution $\eta$ by a categorical distribution $\hat{\eta}=\sum_{i=1}^{k}f_i \delta_{l_i}$ where $l_1, l_2, ..., l_k$ is a set of fixed supports and $\{f_i\}_{i=1}^k$ are learnable probabilities, normally parameterized by a neural network. A projection is also introduced to have the joint support with newly distributed target probabilities, equipped by a KL divergence to compute the distribution distance between the current and target return distribution within each Bellman update. In practice, C51~\cite{bellemare2017distributional}, an instance of CDRL with $k=51$, performs favorably in various environments.

\section{Optimization Analysis}\label{sec:optimization}


Under Neural FZI established in Section~\ref{sec:neuralFZI}, we analyze two optimization aspects of distributional RL based on the categorical parameterization, including the stable optimization from the loss function in Section~\ref{sec:stability}, and its acceleration effect determined by the gradient estimate variance in Section~\ref{sec:acceleration}.


\subsection{Optimization Analysis for Distributional RL within Neural Fitted Z-Iteration}\label{sec:neuralFZI}

In classical RL, \textit{Neural Fitted Q-Iteration}~(Neural FQI)~\cite{fan2020theoretical,riedmiller2005neural} provides a statistical interpretation of DQN~\cite{mnih2015human}, capturing its two key features, i.e., the leverage of target network and experience replay:
\begin{equation}\begin{aligned}\label{eq:Neural_Q_fitting}
		Q_\theta^{k+1}=\underset{Q_{\theta}}{\operatorname{argmin}} \frac{1}{n} \sum_{i=1}^{n}\left[y_{i}-Q_\theta^k\left(s_{i}, a_{i}\right)\right]^{2},
\end{aligned}\end{equation}
where the target $y_{i}=r(s_i, a_i)+\gamma \max _{a \in \mathcal{A}} Q^k_{\theta^*} \left(s_{i}^{\prime}, a\right)$ is fixed within every $T_{\text{target}}$ steps to update target network $Q_{\theta^*}$ by letting $\theta^*=\theta$. The experience buffer induces independent samples $\left\{\left(s_{i}, a_{i}, r_{i}, s_{i}^{\prime}\right)\right\}_{i \in[n]}$ and ideally without the optimization and TD approximation errors, Neural FQI is exactly the update under Bellman optimality operator~\cite{fan2020theoretical}. Similarly, \cite{sun2021interpreting,ma2021conservative} proposed \textit{Neural Fitted Z-Iteration}~(Neural FZI), a distributional version of Neural FQI based on the parameterization of $Z_\theta$ to characterize distributional RL:
\begin{equation}
	\begin{aligned}\label{eq:Neural_Z_fitting}
		Z_\theta^{k+1}=\underset{Z_{\theta}}{\operatorname{argmin}} \frac{1}{n} \sum_{i=1}^{n} d_p (Y_{i}, Z_\theta^k\left(s_{i}, a_{i}\right)),
	\end{aligned}
\end{equation}
where the target $Y_{i}=R(s_i, a_i)+\gamma Z^k_{\theta^*} \left(s_{i}^{\prime}, \pi_Z(s_i^\prime)\right)$ is a random variable, whose distribution is also fixed within every $T_{\text{target}}$ steps. The target follows a greedy policy rule, where  $\pi_Z(s^\prime_i)= \operatorname{argmax}_{a^\prime} \mathbb{E}\left[Z_{\theta^*}^{k}(s_i^\prime, a^\prime)\right]$ and $d_p$ is the choice of distribution distance. 

\noindent \textbf{Approximate Supervised Learning within Neural FZI to Allow the Optimization Analysis.} Previous optimization analysis focuses on either policy gradient methods~\cite{mei2020global,agarwal2020optimality} or the sample complexity in the tabular setting~\cite{strehl2009reinforcement}. However, there remains some gap between the theory and the practical neural network parameterized RL algorithm, and the previous results may not be directly attainable for the optimization analysis of distributional RL. By contrast, Neural FZI simplifies the optimization problem in deep RL into an approximate iterative supervised learning on a local fixed offline dataset by leveraging experience buffer and target networks, allowing richer optimization analysis. It sufficiently characterizes the offline behaviors of practical distributional RL algorithms and can also approximate online algorithms. In particular, Neural FZI does not consider the exploration; the data distribution shift caused by exploration from an $\epsilon$-greedy policy can be negligible in the online setting, \textit{when the replay memory is sufficiently large or considering the short period.} Thus, the optimization in each phase of Neural FZI can be approximately viewed as supervised learning in contrast to PAC-MDP analysis~\cite{strehl2009reinforcement} that explicitly involves the impact of exploration.

\noindent \textbf{Two Key Factors.} The Neural FZI framework offers new insights to analyze the optimization benefits for practical distributional RL algorithms, within which there are mainly two crucial components. 

\begin{itemize}
	\item \textbf{Factor 1: the choice of $d_p$}. On the one hand, $d_p$ determines the convergence rate of distributional Bellman update, i.e., the speed of outer iterations in Neural FZI. For instance, distributional Bellman operator under Crámer distance is $\sqrt{\gamma}$-contractive~\cite{bellemare2017cramer},  $\gamma$-contractive under  Wasserstein distance~\cite{bellemare2017distributional}. Moreover, $d_p$ also largely affects the continuous optimization problem concerning parameters $\theta$ in $Z_\theta$ within each iteration of Neural FZI. 
	
	\item \textbf{Factor 2: the parameterization of $Z_\theta$}. Given the same $d_p$, a more informative parameterization can approximate the true return distribution more reasonably, promoting the optimization within each phase of Neural FZI. For example, with a more expressiveness power on quantile functions, IQN~\cite{dabney2018implicit} outperforms QR-DQN~\cite{dabney2017distributional} on a wider range of environments.
\end{itemize}

\noindent \textbf{Remark.} We mainly attribute the optimization benefit of distributional RL to the choice of distributional loss  $d_p$ in Neural FZI  relative to the least squared loss in Neural FQI  based on the same categorical parameterization on $Z_\theta$, despite the different convergence rates under them.

\noindent \textbf{Categorical Pameterization Equipped with KL Divergence.} To allow for theoretical analysis, we resort to the histogram function~\cite{wasserman2006all, imani2018improving} as the density estimator of $Z_\theta$, a continuous version of categorical parameterization with their equivalent proof provided in \cite{sun2021interpreting}. After incorporating the projection to redistribute probabilities of target return distribution by the neighboring smoothing proposed in CDRL, the target, and current histogram function estimators inherit the joint supports, based on which we apply KL divergence as $d_p$. In particular, we denote the histogram density estimator as $f^{s, a}$ with $k$ uniform partitions on the support, denote $\mathbf{x}(s)$ as the state feature on each state $s$. We let the support of $Z(s, a)$ be uniformly partitioned into $k$ bins. The output dimension of $f^{s,\cdot}$ can be $|\mathcal{A}| \times k$, where we use the index $a$ to focus on the function $f^{s, a}$. Hence, the function $f^{s, a}: \mathcal{X} \rightarrow[0,1]^{k}$ provides a $k$-dimensional vector $f^{s, a}(\mathbf{x}(s))$ of the coefficients, indicating the probability that the target is in this bin given the state feature $\mathbf{x}(s)$ and action $a$. Next, we use \textit{softmax} based on the linear approximation $\mathbf{x}(s)^{\top} \theta_{i}$ to express $f^{s, a}$, i.e., $f_{i}^{s, a, \theta}(\mathbf{x}(s))=\exp \left(\mathbf{x}(s)^{\top} \theta_{i}\right) / \sum_{j=1}^{k} \exp \left(\mathbf{x}(s)^{\top} \theta_{j}\right)$. For simplicity, we use $f_{i}^{\theta}(\mathbf{x}(s))$ to replace $f_{i}^{s, a, \theta}(\mathbf{x}(s))$. 

\noindent \textbf{Categorical Distributional Loss.} Note that the form of $f^{s, a}$ is similar to that in Softmax policy gradient optimization~\cite{mei2020global,sutton2018reinforcement}, but we focus on the value-based RL rather than the policy gradient RL. Our prediction probability $f_i^{s, a}$ is redefined as the probability in the $i$-th bin over the support of $Z(s, a)$, thus eventually serving as a density function. While the linear approximator is limited, this is the setting where, so far, the cleanest results can be firstly achieved, and understanding this setting is necessary for the first step towards bigger problems of understanding distributional RL algorithms. Under this categorical parameterization with KL divergence, the distributional objective function $\mathcal{L}_\theta(s, a)$ for the continuous optimization in each phase of Neural FZI~(Eq.~\ref{eq:Neural_Z_fitting}) can be expressed as:
\begin{equation}\begin{aligned}\label{eq:histogram} 
		\mathcal{L}_\theta(s, a)  &= -\sum_{i=1}^{k} \int_{z_{i}}^{z_{i}+w_{i}} p^{s, a}(y) \log \frac{f_{i}^\theta(\mathbf{x}(s))}{w_{i}} d y \propto -\sum_{i=1}^{k} p^{s, a}_{i} \log f_{i}^\theta(\mathbf{x}(s)),
\end{aligned}\end{equation} 
where $\theta=\{\theta_1, ..., \theta_{k}\}$ and $p^{s, a}_i$ is the probability in the $i$-th bin of the true density function $p^{s, a}(x)$ for $Z(s, a)$ defined in Eq.~\ref{eq:decomposition}. $w_i$ is the width for the $i$-th bin $(z_i, z_{i+1}]$. The derivation of the categorical distributional loss under the categorical parameterization is given in Appendix~\ref{appendix:histogram}. 

\subsection{Stable Optimization Analysis under Uniform Stability}\label{sec:stability}

\noindent \textbf{Optimization Properties.} Our stable optimization conclusions are based on the smoothness properties of categorical distributional loss in Eq.~\ref{eq:histogram}. A similar histogram loss was also analyzed in \cite{imani2018improving} along with a local Lipschitz constant analysis. By contrast, in Proposition~\ref{prop:lipschitz}, we extend their optimization results and further establish its connection with distributional RL.

\begin{prop}\label{prop:lipschitz} (Properties of Categorical Distributional Loss)
	Assume the state features $\Vert \mathbf{x}(s) \Vert_2 \leq l$ for each state $s$, then $\mathcal{L}_\theta$ is $kl$-Lipschitz continuous, $kl^2$-smooth and convex w.r.t. the parameter $\theta$.
\end{prop}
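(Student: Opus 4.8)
The plan is to reduce the objective $\mathcal{L}_\theta(s,a)$ to a log-sum-exp form and then read off the three properties from standard convex-analytic facts together with a direct gradient/Hessian computation. First I would set $z_i := \mathbf{x}(s)^\top \theta_i$ for the logits, so that $\log f_i^\theta(\mathbf{x}(s)) = z_i - \log\sum_{j=1}^{k} e^{z_j}$; since $\sum_{i=1}^{k} p_i^{s,a} = 1$, the loss in Eq.~\ref{eq:histogram} becomes, up to the $\theta$-independent constant coming from the bin widths $w_i$,
\[
\mathcal{L}_\theta(s,a) \;=\; -\sum_{i=1}^{k} p_i^{s,a}\, z_i \;+\; \log\sum_{j=1}^{k} e^{z_j}.
\]
Convexity in $\theta$ is then immediate: $\theta \mapsto (z_1,\dots,z_k)$ is linear, the log-sum-exp function is convex, precomposition with a linear map preserves convexity, and the remaining term is affine in $\theta$.

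For the Lipschitz bound I would differentiate the log-sum-exp form: $\partial\mathcal{L}_\theta(s,a)/\partial z_j = f_j^\theta(\mathbf{x}(s)) - p_j^{s,a}$, so by the chain rule the $j$-th parameter block satisfies $\nabla_{\theta_j}\mathcal{L}_\theta(s,a) = \big(f_j^\theta(\mathbf{x}(s)) - p_j^{s,a}\big)\mathbf{x}(s)$. Since $f_j^\theta(\mathbf{x}(s)), p_j^{s,a} \in [0,1]$, each block has norm at most $\|\mathbf{x}(s)\| \le l$, and the triangle inequality over the $k$ blocks gives $\|\nabla_\theta \mathcal{L}_\theta(s,a)\| \le kl$, which is exactly $kl$-Lipschitz continuity.

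For smoothness I would compute the Hessian block-wise. Using $\partial f_i^\theta(\mathbf{x}(s))/\partial\theta_j = f_i^\theta(\mathbf{x}(s))\big(\delta_{ij} - f_j^\theta(\mathbf{x}(s))\big)\mathbf{x}(s)$, the $(i,j)$ block of $\nabla_\theta^2 \mathcal{L}_\theta(s,a)$ equals $f_i^\theta(\mathbf{x}(s))\big(\delta_{ij} - f_j^\theta(\mathbf{x}(s))\big)\mathbf{x}(s)\mathbf{x}(s)^\top$, so $\nabla_\theta^2\mathcal{L}_\theta(s,a) = \big(\mathrm{diag}(f^\theta) - f^\theta (f^\theta)^\top\big) \otimes \mathbf{x}(s)\mathbf{x}(s)^\top$. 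This simultaneously re-proves convexity, since $\mathrm{diag}(f^\theta) - f^\theta(f^\theta)^\top$ is the covariance matrix of a categorical draw and hence positive semidefinite, as is $\mathbf{x}(s)\mathbf{x}(s)^\top$. Bounding the operator norm of each $d\times d$ block by $\|\mathbf{x}(s)\mathbf{x}(s)^\top\| = \|\mathbf{x}(s)\|^2 \le l^2$ and aggregating crudely over the $k$ blocks in a row yields $\|\nabla_\theta^2\mathcal{L}_\theta(s,a)\| \le kl^2$, i.e. $kl^2$-smoothness.

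The differentiation is routine; the only real care needed is the bookkeeping that produces exactly the stated constants, and in particular noticing that the factor $k$ comes purely from summing the per-block estimates rather than from any genuine dimension dependence — using $\sum_i |f_i^\theta(\mathbf{x}(s)) - p_i^{s,a}| \le 2$ together with $\|\mathrm{diag}(f^\theta) - f^\theta(f^\theta)^\top\| \le 1$ and the identity $\|A\otimes B\| = \|A\|\|B\|$ one could even replace $kl$ and $kl^2$ by the dimension-free $2l$ and $l^2$, but the stated bounds are all that the subsequent stability discussion requires. I would also record explicitly that the integral form of Eq.~\ref{eq:histogram} differs from the simplified proportional form only by the $\theta$-independent term $\sum_i p_i^{s,a}\log w_i$, so none of the three properties is affected by that reduction.
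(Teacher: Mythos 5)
Your proposal is correct and follows essentially the same route as the paper's proof: both arguments reduce the loss to a log-sum-exp form for convexity, compute the block gradient $\nabla_{\theta_i}\mathcal{L}_\theta(s,a)=\bigl(f_i^\theta(\mathbf{x}(s))-p_i^{s,a}\bigr)\mathbf{x}(s)$ and sum over the $k$ blocks to get the $kl$-Lipschitz bound, and obtain $kl^2$-smoothness from the bounded second derivative of the log-softmax combined with $\Vert\mathbf{x}(s)\Vert\le l$. The only cosmetic difference is that you bound smoothness via the explicit block Hessian $\bigl(\mathrm{diag}(f^\theta)-f^\theta (f^\theta)^\top\bigr)\otimes\mathbf{x}(s)\mathbf{x}(s)^\top$ instead of the paper's scalar $1$-smoothness-plus-composition lemma, and your remark that the constants could be sharpened to the dimension-free $2l$ and $l^2$ is also correct but not needed for the stated result.
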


Please refer to Appendix~\ref{appendix:lemma_lipschitz} for the proof. The smoothness properties of categorical distributional loss  $d_p$ are the foundation for the stable optimization of distributional RL. In stark contrast, classical RL optimizes a least squared loss function~\cite{sutton2018reinforcement} in Neural FQI. It is known that the least squared estimator has no bounded Lipschitz constant in general and is only $\lambda_\text{max}$-smooth, where $\lambda_\text{max}$ is the largest singular value of the data matrix. Specifically, we have $\Vert \nabla_\theta \mathcal{L}_\theta \Vert \le kl$ for the categorical distributional loss in distributional RL. By contrast, the gradient norm in classical RL is $|y_i - Q_\theta^k(s, a)|\Vert \mathbf{x}(s) \Vert$, where  $Q_\theta^k(s, a)=\sum_{i=1}^{k} (z_i+z_{i+1})f_{i}^\theta(\mathbf{x}(s)) / 2 w_{i}$ under the same categorical parameterization for a fair comparison. Clearly, $Q_\theta^k(s, a)$ can be sufficiently large if the support $[z_0, z_k]$ is specified to be large, which is common in environments with a high level of expected returns~\cite{bellemare2017distributional}. As such, $|y_i - Q_\theta^k(s, a)|$ can vary significantly larger than $k$ and classical RL with the potentially larger upper bound of gradient norms is prone to the instability optimization issue.

\noindent \textbf{Uniform Stability of Distributional RL.} As an application of stable analysis in \cite{hardt2016train}, we next show that distributional RL loss can naturally induce a uniform stability property under the desirable smoothness properties in Proposition~\ref{prop:lipschitz}, while classical RL can not. We first recap the definition of uniform stability for an algorithm while running \textit{Stochastic Gradient Descent}~(SGD) in Definition~\ref{def:stability}.

\begin{myDef}\label{def:stability}(Uniform Stability)~\cite{hardt2016train}
	Consider a loss function $g_w(e)$ parameterized by $w$ encountered on the example $e$, a randomized algorithm $\mathcal{M}$ is uniformly stable if for all data sets $\mathcal{D}, \mathcal{D}^\prime$ such that $\mathcal{D}, \mathcal{D}^\prime$ differ in at most one example, we have 
	\begin{equation}\begin{aligned}\label{eq:uniform_stable}
			\sup_{e} \mathbb{E}_{\mathcal{M}}\left[g_{\mathcal{M}(\mathcal{D}) }(e)-g_{\mathcal{M}\left(\mathcal{D}^{\prime}\right) }\left(e\right)\right] \leq \epsilon_{\text {stab }}.
	\end{aligned}\end{equation}
\end{myDef}

\noindent \textbf{Remark: Rationale of Uniform Stability Analysis.} One may be concerned whether the uniform stability analysis is applicable to the RL setting with a gradually varying experience replay buffer. Thanks to the Neural FZI framework, it can be viewed as an approximate supervised learning on a nearly fixed offline dataset $\mathcal{D}$ with each iteration of Neural FZI, as the experiment replay allows nearly independent sampling on a fixed data distribution in a short period when the reply memory is large enough~\cite{fan2020theoretical}.  As such, the loss difference by varying the dataset for at most one sample can serve as a surrogate to measure the uniform stability for an algorithm in each phase of Neural FZI.

\begin{theorem}\label{theorem:lipschitz} (Uniform Stability for Distributional RL) Suppose that we run SGD under $\mathcal{L}_\theta$ in Eq.~\ref{eq:histogram} with step sizes $\lambda_t \le 2 / kl^2$ for $T$ steps. Assume $\Vert \mathbf{x}(s) \Vert \leq l$ for each state $s$ and action $a$, then we have $\mathcal{L}_\theta$ satisfies the uniform stability in Definition~\ref{def:stability} with $\epsilon_{\text {stab }} \leq \frac{4kT}{n}$, i.e., 
	\begin{eqnarray}
		\begin{aligned}
			\mathbb{E}\left|\mathcal{L}_{\theta_T}(s, a) - \mathcal{L}_{\theta_T^\prime}(s, a)\right| \leq \frac{4kT}{n},
		\end{aligned}
	\end{eqnarray}
	where $\theta_T$ and $\theta_T^\prime$ are the minimizers after $T$ steps under the dataset $\mathcal{D}$ and $\mathcal{D}^\prime$, respectively.
\end{theorem}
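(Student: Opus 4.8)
The plan is to invoke the uniform-stability machinery for SGD developed by \citep{hardt2016train}, specialized to the categorical distributional loss whose regularity was already established in Proposition~\ref{prop:lipschitz}. That proposition gives that $\mathcal{L}_\theta$ is convex, $kl$-Lipschitz, and $kl^2$-smooth in $\theta$; write $L = kl$ and $\beta = kl^2$ for brevity. The argument couples two runs of SGD, one on $\mathcal{D}$ and one on $\mathcal{D}^\prime$ (which differ in a single example), driven by the \emph{same} sequence of sampled indices, and tracks the growth of the parameter deviation $\delta_t = \Vert \theta_t - \theta_t^\prime \Vert$ along the trajectories, starting from $\delta_0 = 0$.

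First I would record the one-step expansion fact: for a convex, $\beta$-smooth function $g$ and step size $\alpha \le 2/\beta$, the gradient update map $G(\theta) = \theta - \alpha \nabla g(\theta)$ is non-expansive, $\Vert G(\theta) - G(\tilde\theta)\Vert \le \Vert \theta - \tilde\theta\Vert$, which follows from the co-coercivity of the gradient of a convex smooth function. Since Proposition~\ref{prop:lipschitz} supplies exactly convexity and $\beta = kl^2$-smoothness, the hypothesis $\lambda_t \le 2/(kl^2)$ makes every update that uses an example common to both datasets non-expansive, so $\delta_{t+1} \le \delta_t$ on that event. On the complementary event — sampling the one index at which $\mathcal{D}$ and $\mathcal{D}^\prime$ disagree, which has probability $1/n$ under uniform sampling — the two updates differ, and the triangle inequality together with the bound $\Vert \nabla_\theta \mathcal{L}_\theta \Vert \le L$ from Proposition~\ref{prop:lipschitz} yields $\delta_{t+1} \le \delta_t + 2\lambda_t L$.

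Next I would take expectations over the sampling to get the recursion $\mathbb{E}[\delta_{t+1}] \le \mathbb{E}[\delta_t] + \tfrac{2\lambda_t L}{n}$, and unroll it from $\delta_0 = 0$ to obtain $\mathbb{E}[\delta_T] \le \tfrac{2L}{n}\sum_{t=1}^T \lambda_t$. Finally, applying the $L$-Lipschitz continuity of $\mathcal{L}_\theta$ once more to pass from parameter deviation to loss deviation gives $\mathbb{E}\left|\mathcal{L}_{\theta_T}(s,a) - \mathcal{L}_{\theta_T^\prime}(s,a)\right| \le L\,\mathbb{E}[\delta_T] \le \tfrac{2L^2}{n}\sum_{t=1}^T \lambda_t$; substituting $L^2 = k^2 l^2$ and $\sum_t \lambda_t \le 2T/(kl^2)$ collapses this to $\tfrac{2k^2l^2}{n}\cdot\tfrac{2T}{kl^2} = \tfrac{4kT}{n}$, and taking the supremum over $(s,a)$ delivers $\epsilon_{\text{stab}} \le 4kT/n$.

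The main obstacle — the only place genuine work is needed beyond bookkeeping — is the non-expansiveness of the gradient step for convex $\beta$-smooth losses when $\alpha \le 2/\beta$; the rest is the standard probabilistic recursion. I would also be careful that the constants in Proposition~\ref{prop:lipschitz} hold uniformly over the $(s,a)$ pairs appearing in the objective of Eq.~\ref{eq:Neural_Z_fitting}, which they do since those bounds depend on the data only through $\Vert \mathbf{x}(s)\Vert \le l$, and that the coupling argument legitimately uses a shared randomness source for the two SGD trajectories so that the per-step case split above is valid.
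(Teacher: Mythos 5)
Your proposal is correct and follows essentially the same route as the paper's proof: both invoke the Hardt et al.\ SGD stability framework, use the convexity and $kl^2$-smoothness from Proposition~\ref{prop:lipschitz} with $\lambda_t \le 2/kl^2$ to get non-expansiveness on the common-example event, the $kl$-Lipschitz gradient bound to handle the probability-$1/n$ differing-example event, and then unroll the resulting recursion and apply Lipschitz continuity once more to reach $4kT/n$. The only cosmetic difference is that you derive the non-expansiveness directly via co-coercivity rather than citing it as a lemma, which is the same underlying argument.
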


Please refer to the proof of Theorem~\ref{theorem:lipschitz} in Appendix~\ref{appendix:lipschitz}. Theorem~\ref{theorem:lipschitz} shows that while running SGD to solve the categorical distributional loss within each Neural FZI, the continuous optimization process in each iteration is $\epsilon_{\text{stab}}$-uniformly stable with the stability errors shrinking at the rate of $O(n^{-1})$. The stable optimization has multiple advantages, including  $\epsilon_{\text {stab }}$-bounded generalization gap, a desirable local minimum in deep learning optimization literature~\cite{hardt2016train}, and improvement in performance in RL~\cite{bjorck2021towards,li2021functional}. By contrast, classical RL may not yield thestable optimization property without these smooth properties. For example, $\lambda_\text{max}$-smooth may be of less help for the optimization given a bad conditional number of the design matrix where $\lambda_\text{max}$ could be sufficiently large. Empirically, we validate the stable gradient behaviors, with smaller gradient norms in the input space, of CDRL compared with classical RL, and similar results are also observed in Quantile Regression distributional RL in Section~\ref{sec:experiments}. 

\noindent \textbf{Remark: Limitations.} The potential optimization instability for classical RL can be used to partially explain its inferiority to distributional RL in most environments, although it may not explain why distributional RL could not perform favorably in certain games~\cite{ceron2021revisiting}. We leave the comprehensive explanation as future works.

\noindent \textbf{Remark: Non-linear Categorical Parameterization.} Although the stability above optimization conclusions are established on the linear categorical parameterization on  $Z^\pi$, similar conclusions with a non-linear categorical parameterization can be naturally expected by non-convex optimization techniques proposed in \cite{hardt2016train}. We empirically validate our theoretical conclusions by directly applying practical neural network parameterized distributional RL algorithms.

\subsection{Acceleration Effect of distributional RL}\label{sec:acceleration}

To characterize the acceleration effect of distributional RL, we additionally leverage the recently proposed \textit{return density function decomposition}~\cite{sun2021interpreting}.

\noindent \textbf{Return Density Function Decomposition.} To decompose the optimization impact of return distribution into its expectation and the remaining distribution part, we apply the return density function decomposition to decompose the target histogram density function $p^{s, a}$. This decomposition was successfully applied to derive the distributional regularization effect of distributional RL and was rigorously justified in \cite{sun2021interpreting}.  Based on the categorical parameterization, we denote $\Delta_{E}$ as the interval that $\mathbb{E}\left[Z\pi(s, a)\right]$ falls into, i.e., $\mathbb{E}\left[Z\pi(s, a)\right] \in \Delta_{E}$, and the categorical parameterized $p^{s, a}(x) = \sum_{i=1}^N f_i^\theta \mathds{1}(x\in \Delta_i) /\Delta$ can be decomposed as
\begin{equation}\begin{aligned}\label{eq:decomposition}
		p^{s, a}(x) & = (1-\epsilon) p_E^{s, a} + \epsilon \mu^{s, a} = (1-\epsilon) \mathds{1}(x\in \Delta_E) /\Delta + \epsilon \sum_{i=1}^{N}p^\mu_i \mathds{1}(x\in \Delta_i) /\Delta,
\end{aligned}\end{equation}
where $p^{s, a}$ is decomposed into a single-bin histogram density $ p_E^{s, a} $ and an induced one $\widehat{\mu}^{s,a}$ with each bin probability $p_i^\mu$ determined by $f_i^\theta$. The pre-specified $\epsilon$ measures the impact of remaining distribution $\widehat{\mu}^{s,a}$ \textit{independent of} its expectation $\mathbb{E}\left[Z^\pi(s, a)\right]$. It shows optimizing the first term in Eq.~\ref{eq:decomposition} is equivalent to the classical RL loss in Neural FQI~\cite{sun2021interpreting}, which we provide the proof in Appendix~\ref{appendix:equivalence} for completeness. Therefore, this decomposition allows us to conduct the acceleration effect of distributional RL loss as opposed to classical RL.

\noindent \textbf{Measuring the Variance of Gradient Estimates.} Within Neural FZI, our goal is to minimize $\frac{1}{n}\sum_{i=1}^{n}\mathcal{L}_\theta(s_i, a_i)$. 
We rewrite $\mathcal{L}_\theta(s, a)$ as $\mathcal{L}_\theta(g^{s, a}, f^{s, a}_\theta)$, where the target density function $g^{s, a}$ can be $p^{s, a}$, $\mu^{s, a}$ or $p_E^{s, a}$, and $f^{s, a, \theta}$ is rewritten as $f^{s, a}_\theta$ for conciseness. We denote $G^k(\theta) = \mathbb{E}\left[\mathcal{L}_\theta( p_E^{s, a} , f_\theta^{s, a})\right]$ and use $G(\theta)$ for $G^k(\theta)$ for simplicity. Based on Proposition~\ref{prop:lipschitz}  in Section~\ref{sec:stability}, the appealing optimization properties concerning the parameter $\theta$ in $f_\theta$ still hold for $G(\theta)$. Although $p^{s, a}_E$ is a single-bin density without non-zero joint support as $f^{s, a}_\theta$, thanks to the leverage of target networks, the KL-based $\mathcal{L}_\theta$ would degrade to the cross-entropy loss, on which $\mathcal{L}_\theta$ is still well-defined. As the KL divergence has unbiased gradient estimates, we let the variance of its stochastic gradient over the expectation-related term $p^{s, a}_E$  be bounded, i.e.,
\begin{equation}\begin{aligned}
		\mathbb{E}_{(s, a)\sim \rho^\pi}\left[\|\nabla \mathcal{L}_\theta(p^{s, a}_E, f_\theta^{s, a}))-\nabla G(\theta)\|^{2}\right]=\sigma^{2}.
\end{aligned}\end{equation}
Next, following the similar label smoothing analysis in \cite{xu2020towards}, we further characterize the approximation degree of $f^{s, a}_\theta$ to the target return distribution $\mu^{s, a}$ by measuring its variance as $\kappa \sigma^2$:
\begin{equation}\begin{aligned}\label{eq:acceleration_kappa}
		\mathbb{E}_{(s, a)\sim \rho^\pi}\left[\|\nabla \mathcal{L}_\theta(\mu^{s, a}, f_\theta^{s, a}))-\nabla G(\theta)\|^{2}\right]=\hat{\sigma}^2:=\kappa \sigma^{2}.
\end{aligned}\end{equation}
Notably, $\kappa$ can be used to measure the approximation error between  $f_\theta^{s, a}$ and $\mu^{s, a}$ and we do not assume $\hat{\sigma}^2$ to be bounded as $\kappa$ can be arbitrarily large. This expression $\kappa \sigma^2$ for $\hat{\sigma}^2$ allows us to utilize $\kappa$ to characterize different acceleration effects for distributional RL given different $\kappa$. Concretely, a favorable approximation of $f_\theta^{s, a}$ to $\mu^{s, a}$, which coincides with the role of the $Z_\theta$ parameterization, will lead to a small $\kappa$, contributing to the acceleration effect of distributional RL as shown in Theorem~\ref{theorem:acceleration}. 
\begin{prop}\label{prop:acceleration}
	Based on the return density decomposition in Eq.~\ref{eq:decomposition}, and Eq.~\ref{eq:acceleration_kappa}, we have:
	\begin{equation}\begin{aligned}\label{eq:acceleration_kappa_complete}
			& \mathbb{E}_{(s, a)\sim \rho^\pi}\left[\|\nabla \mathcal{L}_\theta(p^{s, a}, f_\theta^{s, a}))-\nabla G(\theta)\|^{2}\right] \le (1-\epsilon)^2\sigma^{2} + \epsilon^2 \kappa \sigma^{2}.
	\end{aligned}\end{equation}
\end{prop}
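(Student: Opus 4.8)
The plan is to exploit that the categorical distributional loss of Eq.~\ref{eq:histogram}, regarded as a function $\mathcal{L}_\theta(g^{s,a},f_\theta^{s,a})$ of the target density $g^{s,a}$ (with bin masses $g_i^{s,a}=\int_{l_i}^{l_i+w_i}g^{s,a}(y)\,dy$), is \emph{linear} in $g^{s,a}$: both $\mathcal{L}_\theta(g^{s,a},f_\theta^{s,a})=-\sum_{i=1}^{k}g_i^{s,a}\log f_i^\theta(\mathbf{x}(s))$ and its gradient $\nabla_\theta\mathcal{L}_\theta(g^{s,a},f_\theta^{s,a})$ depend linearly on the vector $(g_i^{s,a})_i$. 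Substituting the mixture density $p^{s,a}=(1-\epsilon)\,\delta_{\{x=\mathbb{E}[Z^\pi(s,a)]\}}+\epsilon\,\mu^{s,a}$ from the value distribution decomposition (Eq.~\ref{eq:decomposition}) therefore gives
\begin{equation}\begin{aligned}\label{eq:prop2-linear}
\nabla\mathcal{L}_\theta(p^{s,a},f_\theta^{s,a})=(1-\epsilon)\,\nabla\mathcal{L}_\theta\left(\delta_{\{x=\mathbb{E}[Z^\pi(s,a)]\}},f_\theta^{s,a}\right)+\epsilon\,\nabla\mathcal{L}_\theta(\mu^{s,a},f_\theta^{s,a}),
\end{aligned}\end{equation}
and, averaging over $(s,a)\sim\rho^\pi$ and using $\mathcal{L}_\theta(s,a)=\mathcal{L}_\theta(p^{s,a},f_\theta^{s,a})$, the same convex combination for $\nabla G(\theta)=\mathbb{E}_{(s,a)\sim\rho^\pi}[\nabla\mathcal{L}_\theta(p^{s,a},f_\theta^{s,a})]$.

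Next I would subtract $\nabla G(\theta)=(1-\epsilon)\nabla G(\theta)+\epsilon\nabla G(\theta)$ from Eq.~\ref{eq:prop2-linear} and write $\nabla\mathcal{L}_\theta(p^{s,a},f_\theta^{s,a})-\nabla G(\theta)=(1-\epsilon)u+\epsilon v$ with $u:=\nabla\mathcal{L}_\theta(\delta_{\{x=\mathbb{E}[Z^\pi(s,a)]\}},f_\theta^{s,a})-\nabla G(\theta)$ and $v:=\nabla\mathcal{L}_\theta(\mu^{s,a},f_\theta^{s,a})-\nabla G(\theta)$. Expanding the squared norm and taking expectation over $(s,a)\sim\rho^\pi$ yields
\begin{equation}\begin{aligned}\label{eq:prop2-expand}
\mathbb{E}\left[\|\nabla\mathcal{L}_\theta(p^{s,a},f_\theta^{s,a})-\nabla G(\theta)\|^{2}\right]=(1-\epsilon)^{2}\sigma^{2}+\epsilon^{2}\kappa\sigma^{2}+2\epsilon(1-\epsilon)\,\mathbb{E}\langle u,v\rangle,
\end{aligned}\end{equation}
where I substituted $\mathbb{E}\|u\|^{2}=\sigma^{2}$ and $\mathbb{E}\|v\|^{2}=\hat{\sigma}^{2}=\kappa\sigma^{2}$ from Eq.~\ref{eq:acceleration_kappa}. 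Hence the proposition reduces to $\mathbb{E}\langle u,v\rangle\le 0$.

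To establish this, I would combine two facts. First, since $\mu^{s,a}$ in the decomposition is by construction the part of the return distribution \emph{independent of} its expectation $\mathbb{E}[Z^\pi(s,a)]$, the two gradient random vectors $\nabla\mathcal{L}_\theta(\delta_{\{x=\mathbb{E}[Z^\pi(s,a)]\}},f_\theta^{s,a})$ and $\nabla\mathcal{L}_\theta(\mu^{s,a},f_\theta^{s,a})$, viewed as functions of $(s,a)\sim\rho^\pi$, may be treated as uncorrelated, so $\mathbb{E}\langle u,v\rangle=\langle\mathbb{E}u,\mathbb{E}v\rangle$. Second, averaging Eq.~\ref{eq:prop2-linear} over $\rho^\pi$ gives $(1-\epsilon)\mathbb{E}u+\epsilon\mathbb{E}v=0$, i.e.\ $\mathbb{E}v=-\tfrac{1-\epsilon}{\epsilon}\mathbb{E}u$, so $\langle\mathbb{E}u,\mathbb{E}v\rangle=-\tfrac{1-\epsilon}{\epsilon}\|\mathbb{E}u\|^{2}\le 0$. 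Substituting $\mathbb{E}\langle u,v\rangle\le 0$ into Eq.~\ref{eq:prop2-expand} gives the claimed bound $(1-\epsilon)^{2}\sigma^{2}+\epsilon^{2}\kappa\sigma^{2}$.

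I expect the main obstacle to be exactly this last step: the ``independence of the expectation'' in Eq.~\ref{eq:decomposition} is a statement about the target distribution $F^{s,a}$, and turning it into genuine uncorrelatedness of the two \emph{gradient} vectors over $\rho^\pi$ should be argued carefully (and, if one is unwilling to posit it, can be added as an explicit assumption, in the spirit of the label-smoothing analysis of \cite{xu2020towards}). Without it, the best one can do is bound the cross term by Cauchy--Schwarz, $\mathbb{E}\langle u,v\rangle\le\sqrt{\mathbb{E}\|u\|^{2}}\,\sqrt{\mathbb{E}\|v\|^{2}}=\sqrt{\kappa}\,\sigma^{2}$, which only yields the weaker $\left((1-\epsilon)\sigma+\epsilon\sqrt{\kappa}\,\sigma\right)^{2}$. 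The remaining points — interchanging $\nabla_\theta$ with $\mathbb{E}_{(s,a)\sim\rho^\pi}$, carrying over convexity/smoothness of $G$ from Proposition~\ref{prop:lipschitz}, and reading the Dirac term against the bin integral as the indicator $\mathbbm{1}\{\mathbb{E}[Z^\pi(s,a)]\in(l_i,l_{i+1}]\}$ — are routine.
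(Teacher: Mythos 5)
Your proposal follows essentially the same route as the paper's own proof: both exploit that the categorical loss $\mathcal{L}_\theta(g^{s,a},f^{s,a}_\theta)=-\sum_i g_i^{s,a}\log f_i^\theta(\mathbf{x}(s))$ is linear in the target, so the mixture $p^{s,a}=(1-\epsilon)\delta_{\{x=\mathbb{E}[Z^\pi(s,a)]\}}+\epsilon\mu^{s,a}$ gives $\nabla\mathcal{L}_\theta(p^{s,a},f^{s,a}_\theta)=(1-\epsilon)\nabla\mathcal{L}_\theta(\delta,f^{s,a}_\theta)+\epsilon\nabla\mathcal{L}_\theta(\mu^{s,a},f^{s,a}_\theta)$, and then bound the mean squared deviation from $\nabla G(\theta)$ using the two variance definitions. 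The difference is in how the bound is closed. The paper simply invokes ``$\|(1-\epsilon)\mathbf{a}+\epsilon\mathbf{b}\|^2\le(1-\epsilon)^2\|\mathbf{a}\|^2+\epsilon^2\|\mathbf{b}\|^2$'', calling it a triangle inequality; as stated this is false in general (take $\mathbf{a}=\mathbf{b}$), and it holds exactly when the cross term $\langle\mathbf{a},\mathbf{b}\rangle$ is nonpositive — which is precisely the term $2\epsilon(1-\epsilon)\mathbb{E}\langle u,v\rangle$ you isolate. So your expansion is the honest version of the paper's step, and your observation that the proposition reduces to $\mathbb{E}\langle u,v\rangle\le 0$ identifies the actual gap in the published argument.

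Your proposed patch, however, is not something the paper's assumptions deliver. The identity $(1-\epsilon)\mathbb{E}u+\epsilon\mathbb{E}v=0$ is correct (it follows from $\nabla G(\theta)=\mathbb{E}[\nabla\mathcal{L}_\theta(p^{s,a},f^{s,a}_\theta)]$ and linearity), but the step $\mathbb{E}\langle u,v\rangle=\langle\mathbb{E}u,\mathbb{E}v\rangle$ requires the two gradient-deviation fields to be uncorrelated over $(s,a)\sim\rho^\pi$, and the ``independent of its expectation'' phrasing in Eq.~\ref{eq:decomposition} is a statement about the structure of $F^{s,a}$ at each fixed $(s,a)$, not about the joint behaviour of $\nabla\mathcal{L}_\theta(\delta,f^{s,a}_\theta)$ and $\nabla\mathcal{L}_\theta(\mu^{s,a},f^{s,a}_\theta)$ across state--action pairs (both depend on the same $f^{s,a}_\theta$, so correlation is generically expected). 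You flag this yourself, and your assessment is accurate: without positing it as an explicit extra assumption, the argument only yields the weaker Jensen-type bound $(1-\epsilon)\sigma^2+\epsilon\kappa\sigma^2$ or the Cauchy--Schwarz bound $\left((1-\epsilon)\sigma+\epsilon\sqrt{\kappa}\,\sigma\right)^2$. In short, your write-up matches the paper's approach, is more careful than the paper's own proof at its weakest step, and correctly diagnoses that the stated bound needs an additional sign or uncorrelatedness condition that the paper asserts implicitly without justification.
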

Proposition~\ref{prop:acceleration} reveals the upper bound of gradient estimate variance for the whole target density function $p^{s, a}$, with proof in Appendix~\ref{appendix:acceleration_lemma}. Before comparing the sample complexity in optimizing both classical and distributional RL, we define the first-order $\tau$-stationary point.

\begin{myDef}\label{definition:acceleration} (First-order $\tau$-Stationary Point)
	When $\min_\theta G(\theta)$, the updated parameters $\mathbb{\theta}_T$ after $T$ steps is a first-order $\tau$-stationary point if $\Vert \nabla_\theta G(\theta_T) \Vert \le \tau$.
\end{myDef}

Based on Definition~\ref{definition:acceleration}, we formally characterize the acceleration effects for distributional RL in Theorem~\ref{theorem:acceleration} that depends upon approximation errors between $\mu^{s, a}$ and $f^{s, a}_\theta$ measured by $\kappa$.

\begin{theorem}\label{theorem:acceleration} (Sample Complexity and Acceleration Effects of Distributional RL) While running SGD to minimize $\mathcal{L}_\theta$ in Eq.~\ref{eq:decomposition} within Neural FZI, we assume the step size $\lambda \leq \frac{1}{kl^2} \min\{1, \frac{\tau^2}{2\sigma^2}\}$, $\epsilon=1/(1+\kappa)$, and the sample is uniformly drawn from $T$ samples. Denote $G(\theta_0)$ as initialization.
	
	(1) (\textbf{Classical RL}) The sample complexity $T = \frac{4 G(\theta_0)}{\lambda \tau^2} = O(\frac{1}{\tau^4})$  when minimizing $\mathcal{L}_\theta(p^{s, a}_E, f_\theta^{s, a})$, such that $\mathcal{L}_\theta$ converges to a $\tau$-stationary point in expectation.
	
	(2) (\textbf{Distributional RL}) The sample complexity $T = O(\frac{1}{\tau^2})$  when minimizing $\mathcal{L}_\theta(p^{s, a}, f_\theta^{s, a})$, such that $\mathcal{L}_\theta$ converges to a $\max\{\tau, 2\sigma \kappa\}$-stationary point in expectation.

\end{theorem}

The proof is provided in Appendix~\ref{appendix:acceleration_theorm}. Theorem~\ref{theorem:acceleration} is inspired by the intuitive connection between the return distribution in distributional RL and the label distribution in label smoothing~\cite{xu2020towards}.

\begin{figure*}[b!]
	\centering
	\includegraphics[width=1.0\textwidth,trim=0 0 0 50,clip]{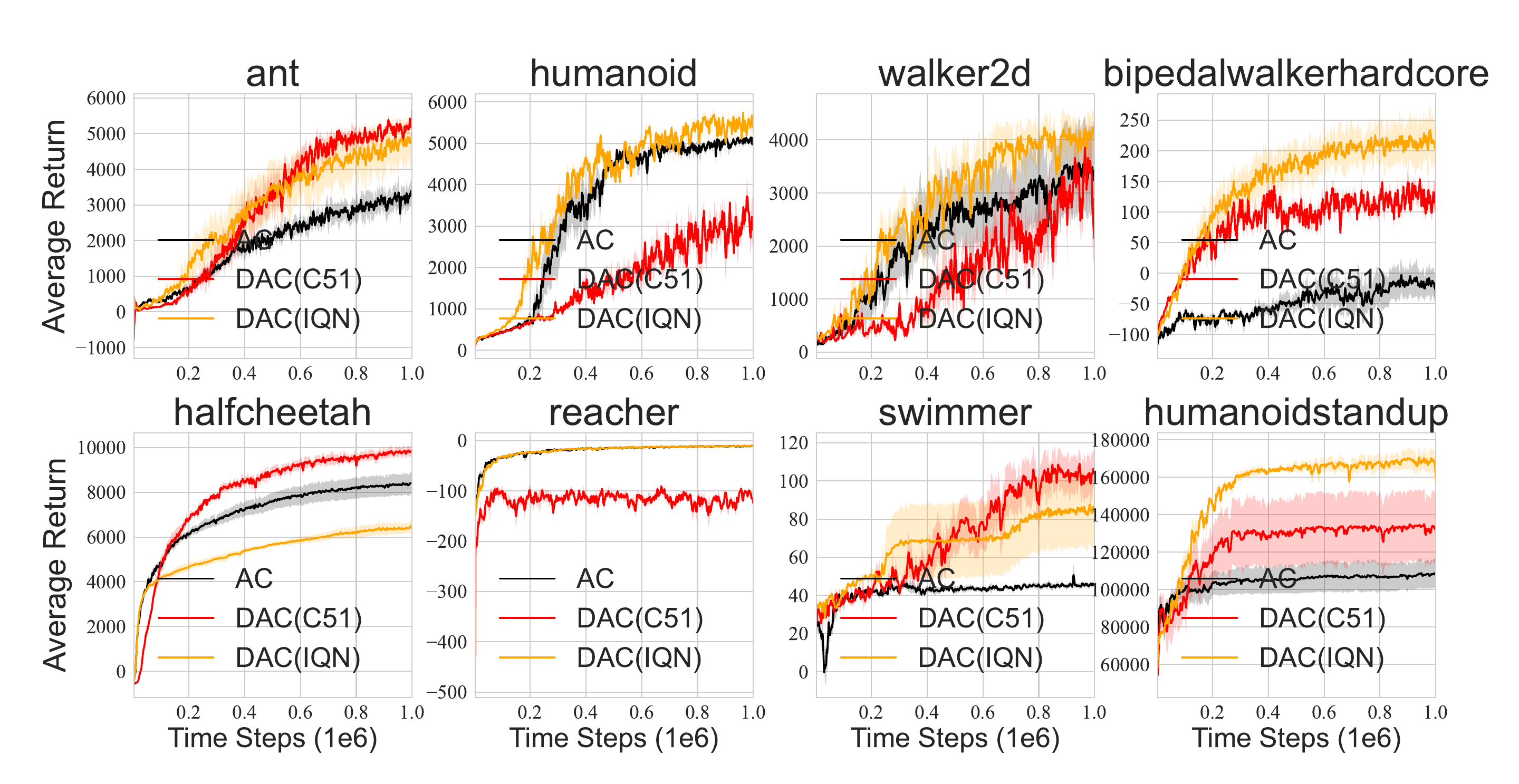}
	\caption{\textbf{Performance.} Learning curve of AC, DAC~(C51), and DAC~(IQN) over five seeds with smooth size five across eight MuJoCo games.}
	\label{fig:performance}
\end{figure*}

\noindent \textbf{Interpretation of Theorem~\ref{theorem:acceleration}}. Theorem~\ref{theorem:acceleration} demonstrates that optimizing the categorical distributional loss of distributional RL can speed up the convergence with the sample complexity from $O(\frac{1}{\tau^4})$ to $O(\frac{1}{\tau^2})$, if the distribution approximation error is favorable. In particular, when the agnostic $\kappa$ determined by the environment satisfies $2\kappa\sigma \leq \tau$, the distributional RL algorithm has an effective return distribution parameterization for $Z_\theta$ with a smaller approximation error between $f_\theta^{s, a}$ and $\mu^{s, a}$ ($p^{s, a}$). In this case, the acceleration effect of distributional RL over classical RL can be guaranteed. However, it is not vice versa. When $2\kappa\sigma > \tau$, it is unclear whether the required sample complexity for distributional RL is higher than classical RL, as classical RL will require a lower sample complexity than $O(\frac{1}{\tau^4})$ to achieve a $2\kappa\sigma$-stationary point in this case. These theoretical results also coincide with past empirical observations~\cite{dabney2017distributional, ceron2021revisiting}, where distributional RL algorithms outperform classical RL in most cases, but are inferior in certain environments. Based on our results in Theorem~\ref{theorem:acceleration}, we contend that these certain environments have much intrinsic uncertainty, the distribution parameterization error between $Z_\theta$ and the true return distribution under the distributional TD approximation is still too large~($\kappa > \frac{\tau}{2\sigma}$) to guarantee an acceleration effect as revealed in Theorem~\ref{theorem:acceleration}.

\noindent \textbf{Smaller Gradient Norms in the Weight Space.} The acceleration effect of distributional RL in Theorem~\ref{theorem:acceleration} also implies that distributional RL tends to have smaller gradient norms concerning parameters than classical RL at the same training step, according to the definition of Lipschitz constant in terms of the first-order stationary point.  The small gradient norms we analyze here are \textit{in the weight space}, commonly used and directly linked with the convergence rate analysis. In contrast, the uniform stability analyzed in Section~\ref{sec:stability} is defined on the bounded loss difference that is strongly correlated to the gradient norms \textit{in the input space}. Similar works include Spectral Normalization to stabilize the training of Generative Adversarial Networks~\cite{miyato2018spectral} and RL~\cite{gogianu2021spectral}, which normalizes the spectral norm of the weight matrix in each layer to lead to a one-valued Lipschitz constant concerning the input. We empirically demonstrate both of them in Section~\ref{sec:experiments}.

\begin{figure*}[b!]
	\centering
	\includegraphics[width=1.0\textwidth,trim=0 0 0 0,clip]{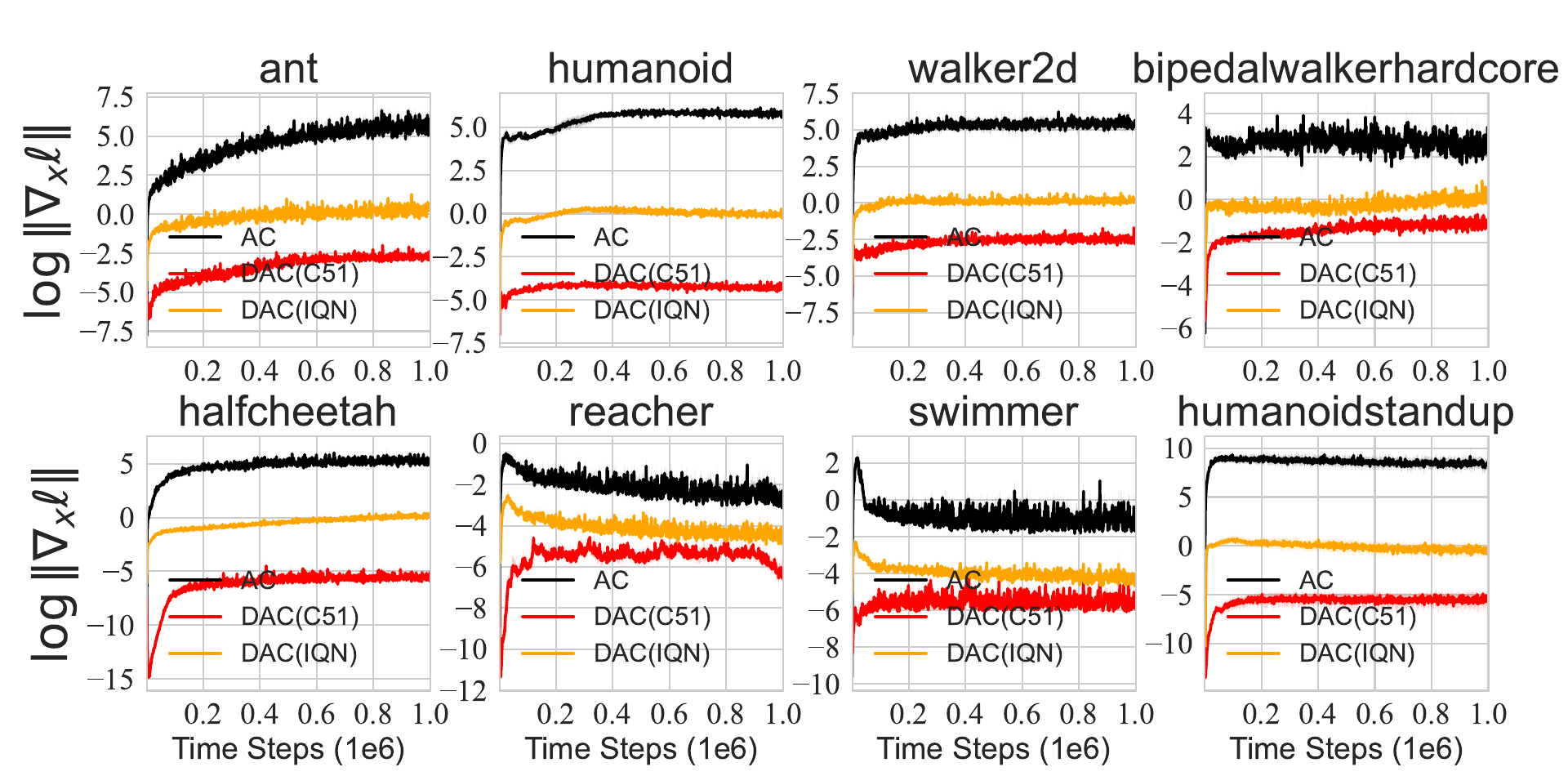}
	\caption{ \textbf{Uniform Stability.} The critic gradient norms in the logarithmic scale regarding \textbf{the state} during the training of AC, DAC~(C51), DAC~(IQN) over 5 seeds on eight MuJoCo environments.}
	\label{fig:optimization}
\end{figure*}

\section{Experiments}\label{sec:experiments}

Our experiments focus on the online distributional RL algorithms on continuous control Mujoco environments to demonstrate their stable gradient behaviors and acceleration effects.


\noindent \textbf{Implementation.} Our implementation is based Soft Actor Critic~(SAC)~\cite{haarnoja2018soft} and distributional Soft Actor Critic~\cite{ma2020dsac}. We eliminate the optimization impact of entropy regularization in these algorithm implementations, and thus, we denote the resulting algorithms as Actor Critic~(AC) and Distributional Actor Critic~(DAC) for conciseness. For DAC, we first perform a categorical parameterized C51 critic loss from the classical least-squared critic loss dubbed DAC~(C51), which coincides with our theoretical analysis in Sections~\ref{sec:stability} and \ref{sec:acceleration}. We further apply our experiments on Quantile Regression distributional RL, i.e., Implicit Quantile Network~(IQN), denoted as DAC~(IQN), to heuristically extend our conclusion in broader algorithm classes. More implementation details are provided in Appendix~\ref{appendix:implementation}. 

\subsection{Performance and Uniform Stability}

Figure~\ref{fig:performance} suggests both  DAC~(IQN) and DAC~(C51) excel at the classical RL counterpart, i.e., AC~(black lines), in most environments, which allows our further optimization analysis.


\begin{figure*}[t!]
	\centering
	\includegraphics[width=1.0\textwidth,trim=0 0 0 0,clip]{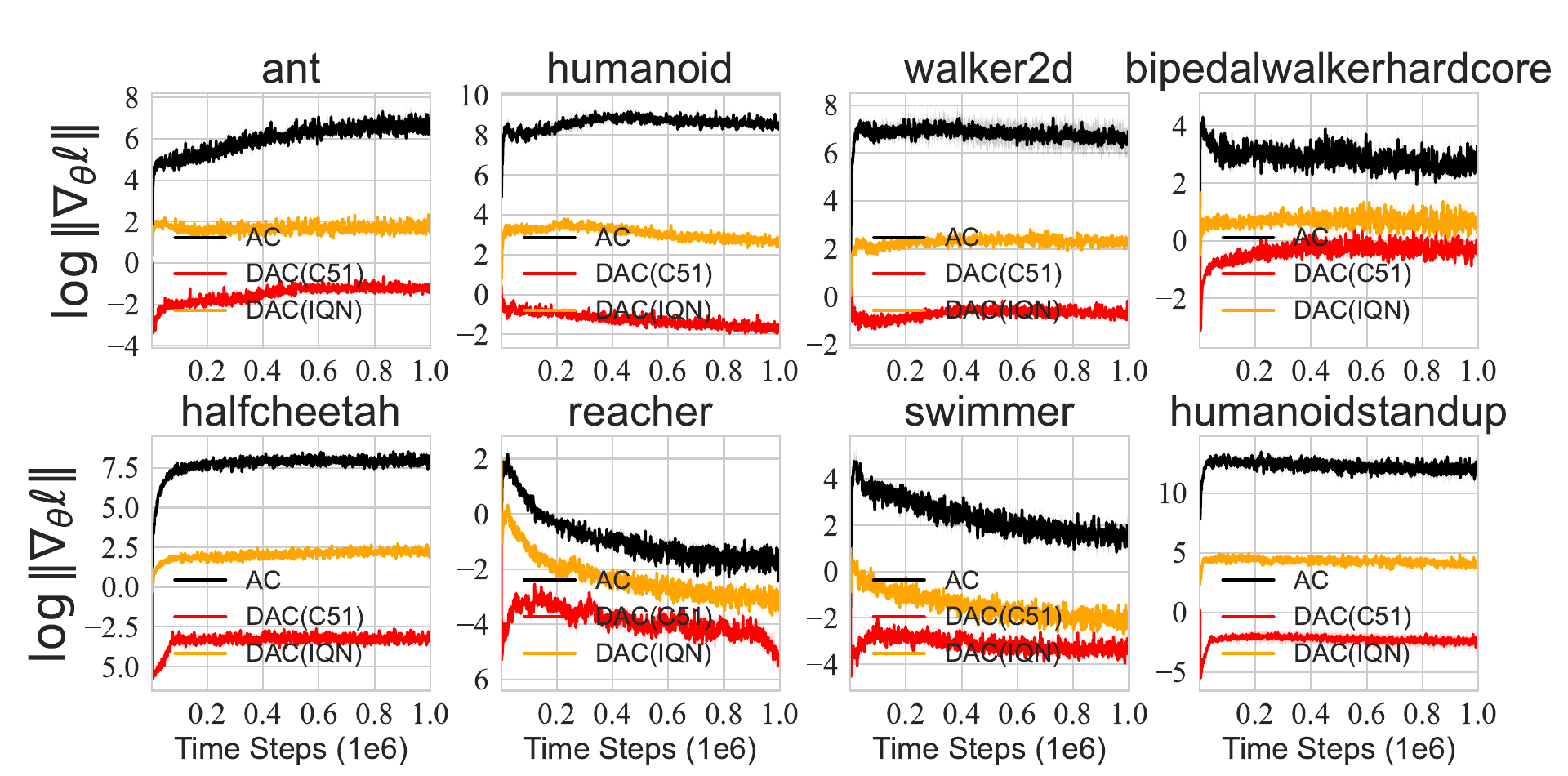}
	\caption{ \textbf{Acceleration Effect.} The critic gradient norms in the logarithmic scale regarding \textbf{network parameters} in the training of AC, DAC~(C51), DAC~(IQN) over 5 seeds on MuJoCo environments.}
	\label{fig:acceleration}
\end{figure*}

\noindent \textbf{Proxy: Gradient Norms in the Input Space.} We demonstrate the advantage of uniform optimization stability for distributional RL over classical RL. According to Theorem~\ref{theorem:lipschitz}, the stable optimization of distribution loss within Neural FZI is described as a bounded loss difference for a neighboring dataset regarding each state $s$ and action $a$. In other words, the error bound holds by taking the supreme over each state and action pair. To measure this algorithm stability, while far from perfect, we consider leveraging \textit{the average gradient norms concerning the state feature $\mathbf{x}(s)$} in the whole optimization process as the proxy. This is because the gradient magnitude in the input space could measure the sensitivity of the loss function regarding each state and action pair.

\noindent \textbf{Results.} Figure~\ref{fig:optimization} suggests that both DAC~(C51) and DAC~(IQN) entail a much smaller gradient norm magnitude than the classical AC~(black lines) across all environments, corroborating the advantage of uniform stability for distributional RL over classical RL analyzed in Theorem~\ref{theorem:lipschitz}. As analyzed in Section~\ref{sec:stability}, this result provides empirical evidence to interpret behaviors of distributional RL.

\subsection{Acceleration Effect of Distributional RL} 

\noindent \textbf{Proxy: Gradient Norms in the Weight Space}. Theorem~\ref{theorem:acceleration} implies that if the distribution parameterization is appropriate, distributional RL can speed up the convergence and thus can achieve better first-order stationary point, corresponding to smaller gradient norms given the time step in the learning process. To demonstrate it, we take the same step size for both DAC and AC, and evaluate the $\ell_2$-norms of gradients  \textit{concerning network parameters} of their critics. A direct comparison between vanilla AC and DAC algorithm is given in Figure~\ref{fig:acceleration}, despite the slight difference in the network architecture in the last layer. For an \textit{apple-to-apple comparison}, we keep the same DAC architecture while implementing a variant AC by optimizing the expectation of represented return distribution. We also find a similar result in  Appendix~\ref{appendix:exp_acceleration_kappa}.

\noindent \textbf{Results.} Figure~\ref{fig:acceleration} showcases that both DAC~(C51) and DAC~(IQN) have smaller gradient norms in terms of network parameters $\theta$ compared with AC in the whole optimization process. This result also validates that distributional RL loss tends to enjoy smoothness properties in Proposition~\ref{prop:lipschitz}. Moreover, it turns out that DAC~(IQN) tends to have smaller gradient norms than DAC~(C51). Given the fact that DAC~(IQN) outperforms DAC~(C51) in most environments in Figure~\ref{fig:performance}, we hypothesize that DAC~(IQN) may have a better acceleration effect than DAC~(C51), contributing to explaining its superiority. Moreover, the more expressive parameterization of IQN over C51 is also helpful in interpreting both the acceleration and the improvement in the final performance. Lastly, according to Theorem~\ref{theorem:acceleration}, the access to the agnostic $\kappa$ can serve as a sufficient condition to discriminate whether a specific distributional RL algorithm can accelerate the training in a given environment. However, a precise evaluation of $\kappa$ is tricky, which we leave as valuable future work.

\section{Conclusion, Limitations and Future Work}

In our paper, we answer the question: \textit{how does return distribution in distributional RL help the optimization} from perspectives of the uniform stability and acceleration effect in the optimization.  Our conclusions are made within a new Neural FZI framework that connects the optimization results in supervised learning with practical deep RL algorithms.

\noindent \textbf{Limitations and Future Work.} Our optimization analysis of distributional RL is based on categorical parameterization, and therefore, some optimization properties, such as uniform stability, may not directly apply to other distributional RL families. The alternative analysis on distributional RL algorithms based on Wasserstein distance is also an integral and valuable complement to our conclusions, which we leave as future work.

\section*{Acknowledgements}

Bei Jiang and Linglong Kong were partially supported by grants from the Canada CIFAR AI Chairs program, the Alberta Machine Intelligence Institute (AMII), and Natural Sciences and Engineering Council of Canada (NSERC), and Linglong Kong was also  partially supported by grants from the Canada Research Chair program from NSERC. We also thank all the constructive suggestions and comments from the reviewers.

\bibliographystyle{plain}
\bibliography{TowardsDRL}

\clearpage
\appendix

\section{Derivation of Categorical Distributional Loss}\label{appendix:histogram}
We show the derivation details of the Categorical distribution loss starting from KL divergence between $p$ and $q_\theta$. $p_i$ is the cumulative probability increment of target distribution $\{Y_i\}_{i\in [n]}$ within the $i$-th bin, and $q_\theta$ corresponds to a (normalized) histogram, and has density values $\frac{f_i^\theta(\mathbf{x}(s))}{w_i}$ per bin. Thus, we have:
\begin{equation}\begin{aligned}
		D_{\text{KL}}\left(p^{s, a}, q^{s, a}_\theta\right)& =\int_{a}^{b} p^{s, a}(y) \log p^{s, a}(y) d y  -\int_{a}^{b} p^{s, a}(y) \log q^{s, a}_\theta(y) d y \\ 
		&\propto -\int_{a}^{b} p^{s, a}(y) \log q^{s, a}_\theta(y) d y\\
		&= -\sum_{i=1}^{k} \int_{z_{i}}^{z_{i}+w_{i}} p^{s, a}(y) \log \frac{f_{i}^\theta(\mathbf{x}(s))}{w_{i}} d y\\
		&=-\sum_{i=1}^{k} \log \frac{f_{i}^\theta(\mathbf{x}(s))}{w_{i}} \underbrace{\left(F^{s, a}\left(z_{i}+w_{i}\right)-F^{s, a}\left(z_{i}\right)\right)}_{p_{i}^{s, a}}\\
		&\propto -\sum_{i=1}^{k} p^{s, a}_{i} \log f_{i}^\theta(\mathbf{x}(s))
\end{aligned}\end{equation}	
where the first $\propto$ results from the fixed target $p^{s, a}$ in the Neural FZI framework. The second equality is based on the categorical parameterization for the density function $q^{s, a}_{\theta}$. The last $\propto$ holds because the width parameter $w_i$ can be ignored for this minimization problem.

\section{Proof of Proposition~\ref{prop:lipschitz}}\label{appendix:lemma_lipschitz}
\begin{proof}
	For the Categorical distributional loss below,
	$$\mathcal{L}_\theta(s, a) = -\sum_{i=1}^{k} p^{s, a}_{i} \log f_{i}^\theta(\mathbf{x}(s)), \ \text{where} \ f_{i}^{\theta}(\mathbf{x}(s))=\frac{\exp \left(\mathbf{x}(s)^{\top}\theta_{i} \right)}{ \sum_{j=1}^{k} \exp \left(\mathbf{x}(s)^{\top} \theta_{j}\right)} $$
	\noindent \textbf{(1) Convexity.} Note that $-\log \frac{\exp \left(\mathbf{x}(s)^{\top}\theta_{i} \right)}{ \sum_{j=1}^{k} \exp \left(\mathbf{x}(s)^{\top} \theta_{j}\right)} = \log \sum_{j=1}^{k} \exp \left(\mathbf{x}(s)^{\top} \theta_{j}\right) -  \mathbf{x}(s)^{\top}\theta_{i}$, the first term is Log-sum-exp, which is convex~(see Convex optimization by Boyd and Vandenberghe), and the second term is affine function. Thus, $\mathcal{L}_\theta(s, a)$ is convex.
	
	\noindent \textbf{(2) $\mathcal{L}_\theta(s, a)$ is $kl$-Lipschitz continuous.} We compute the gradient of the Histogram distributional loss regarding $\theta_i$:
	\begin{equation}\begin{aligned}
			\frac{\partial}{\partial \theta_i} \sum_{j=1}^{k} p^{s, a}_{j} \log f^\theta_{j}(\mathbf{x}(s))
			&=\sum_{j=1}^{k} p^{s, a}_{j} \frac{1}{f^\theta_j(\mathbf{x}(s))} \nabla_{\theta_i} f^\theta_j(\mathbf{x}(s))\\
			&=\sum_{j=1}^{k} p^{s, a}_{j} \frac{1}{f^\theta_j(\mathbf{x}(s))} f^\theta_i(\mathbf{x}(s)) (\delta_{ij}-f^\theta_j(\mathbf{x}(s))) \mathbf{x}(s)\\
			&=\left(p^{s, a}_{i}(1-f^\theta_i(\mathbf{x}(s)))-\sum_{j\neq i}^{k} p^{s, a}_{j} f^\theta_i(\mathbf{x}(s))\right)  \mathbf{x}(s) \\
			&=\left(p^{s, a}_{i}-p^{s, a}_{i} f^\theta_i(\mathbf{x}(s))-(1-p^{s, a}_{i}) f^\theta_i(\mathbf{x}(s))\right)  \mathbf{x}(s)\\
			&=\left( p^{s, a}_{i} - f^\theta_i(\mathbf{x}(s)) \right)  \mathbf{x}(s)
	\end{aligned}\end{equation}	
	where $\delta_{ij}=1$ if $i=j$, otherwise 0. Then, as we have $\Vert \mathbf{x}(s) \Vert \leq l$, we bound the norm of its gradient
	\begin{equation}\begin{aligned}
			\Vert \frac{\partial}{\partial \theta} \sum_{j=1}^{k} p_{j} \log f^\theta_{j}(\mathbf{x}(s)) \Vert
			&\le \sum_{i=1}^{k} \Vert \frac{\partial}{\partial \theta_i} \sum_{j=1}^{k} p_{j} \log f^\theta_{j}(\mathbf{x}(s)) \Vert\\
			&= \sum_{i=1}^k \Vert \left( p^{s, a}_{i} - f^\theta_i(\mathbf{x}(s)) \right)  \mathbf{x}(s) \Vert\\
			&\le \sum_{i=1}^k |p^{s, a}_{i} - f^\theta_i(\mathbf{x}(s))| \Vert \mathbf{x}(s) \Vert \\
			&\leq kl
	\end{aligned}\end{equation}	
	The last equality satisfies because $|p_i - f^\theta_i(\mathbf{x}(s))|$ is less than 1 and even smaller. Therefore, we obtain that $\mathcal{L}_\theta$ is $kl$-Lipschitz.
	
	\noindent \textbf{(3) $\mathcal{L}_\theta$ is $kl^2$-Lipschitz smooth.} A lemma is that $\log(1+\exp(x))$ is $\frac{1}{4}$-smooth as its second-order gradient is bounded by $\frac{1}{4}$, and if $g(w)$ is $\beta$-smooth w.r.t. $w$, then $g(\Braket{x, w})$ is $\beta\Vert x \Vert^2$-smooth. Based on this knowledge, we firstly focus on the 1-dimensional case of the function $\log f^\theta_{j}(z)$, where $f^\theta_{j}(z)=\frac{\exp z_j}{\sum_{i=1}^{k} \exp z_i}$. As we have derived, we know that $\frac{\partial}{\partial \theta_i} \log f^\theta_{j}(z_j) = \delta_{ij}-f^\theta_{i}(z_i)$. Then the second-order gradient is $\frac{\partial^2 \log f^\theta_{j}(z)}{\partial \theta_i \partial \theta_k}=-f^\theta_i(z) (\delta_{ik}-f^\theta_k(z))=f^\theta_i(z)(f^\theta_k(z)-1)$ if $i=k$, otherwise $f^\theta_i(z) f^\theta_k(z)$. Clearly, $|\frac{\partial^2 \log f^\theta_{j}(z)}{\partial \theta_i \partial \theta_k}|\le 1$, which implies that $\log f^\theta_{j}(z)$ is 1-smooth. Thus, $\log f^\theta_{j}(\Braket{x, \theta_i})$ is $\Vert x \Vert^2$-smooth, or $l^2$-smooth. Further, $\sum_{j=1}^{k} p^{s, a}_{j} \log f^\theta_{j}(\mathbf{x}(s))$ is also $l^2$-smooth as we have
	\begin{equation}\begin{aligned}
			&\Vert \nabla_{\theta_{i}} \sum_{j=1}^{k} p^{s, a}_{j} \log f^\mu_{j}(\mathbf{x}(s)) - \nabla_{\theta_{i}} \sum_{j=1}^{k} p^{s, a}_{j} \log f^\nu_{j}(\mathbf{x}(s)) \Vert\\
			&\le \sum_{j=1}^{k} p^{s, a}_{j} \Vert \nabla_{\theta_{i}} \log f^\mu_{j}(\mathbf{x}(s)) - \nabla_{\theta_{i}} \log f^\nu_{j}(\mathbf{x}(s)) \Vert\\
			&\le \sum_{j=1}^{k} p^{s, a}_{j} \cdot l^2 \Vert \mu - \nu \Vert\\
			&=l^2 \Vert \mu - \nu \Vert
	\end{aligned}\end{equation}	
	for each parameter $\mu$ and $\nu$. Therefore, we further have
	\begin{equation}\begin{aligned}
			&\Vert \nabla_{\theta} \sum_{j=1}^{k} p^{s, a}_{j} \log f^\mu_{j}(\mathbf{x}(s)) - \nabla_{\theta} \sum_{j=1}^{k} p^{s, a}_{j} \log f^\nu_{j}(\mathbf{x}(s)) \Vert\\
			&\le \sum_{i=1}^{k}\Vert \nabla_{\theta_{i}} \sum_{j=1}^{k} p^{s, a}_{j} \log f^\mu_{j}(\mathbf{x}(s)) - \nabla_{\theta_{i}} \sum_{j=1}^{k} p^{s, a}_{j} \log f^\nu_{j}(\mathbf{x}(s)) \Vert\\
			&\le \sum_{i=1}^{k} l^2 \Vert \mu - \nu \Vert \\
			& = kl^2 \Vert \mu - \nu \Vert
	\end{aligned}\end{equation}	
	Finally, we conclude that  $\mathcal{L}_\theta(s, a)$ is $kl^2$-smooth.
	
\end{proof}

\section{Proof of Theorem~\ref{theorem:lipschitz}}\label{appendix:lipschitz}

\begin{proof}
	Consider the stochastic gradient descent rule as $G_{\lambda, \mathcal{L}}(\theta)=\theta - \lambda \nabla_{\theta} \mathcal{L}_\theta$. 
	Firstly, we provide two definitions about $\mathcal{L}_\theta$ for the following proof.
	\begin{myDef}
		($\sigma$-bounded) An update rule is $\sigma$-bounded if $\sup_\theta \Vert \theta - \lambda \nabla_{\theta} \mathcal{L}_\theta \Vert \le \sigma$.
	\end{myDef}
	\begin{myDef}
		($\eta$-expansive) An update rule is $\eta$-expansive if $\sup_{v, w} \frac{\Vert G_{\lambda, \mathcal{L}}(v) - G_{\lambda, \mathcal{L}}(w) \Vert}{\Vert u -w \Vert} \le \eta$.
	\end{myDef}
	
	\begin{lemma}\label{appendix:stability_Recursion}
		(Grow Recursion, Lemma 2.5~\cite{hardt2016train}) Fix an arbitrary sequence of updates $G_1, ..., G_T$ and another sequence $G_1^\prime, ..., G_T^\prime$. Let $\theta_0=\theta_0^\prime$ be the starting point and define $\delta_t=\Vert \theta_{i}^\prime - \theta_t  \Vert$, where $\theta_t$ and $\theta_t^\prime$ are defined recursively through
		$$\theta_{t+1}=G_{\lambda, \mathcal{L}}(\theta_{t}), \ \theta_{t+1}^\prime=G^\prime_{\lambda, \mathcal{L}}(\theta_{t}^\prime)$$
		
		Then we have the recurrence relation:
		$$\delta_{t+1} \leq \begin{cases}\eta \delta_{t} & G_{t}=G_{t}^{\prime} \text { is } \eta \text {-expansive } \\ \min (\eta, 1) \delta_{t}+2 \sigma_{t} & G_{t} \text { and } G_{t}^{\prime} \text { are } \sigma \text {-bounded }, G_{t} \text { is } \eta \text { expansive }\end{cases}$$
	\end{lemma}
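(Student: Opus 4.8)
The plan is to prove the recursion directly from the definitions of $\eta$-expansiveness and $\sigma$-boundedness by elementary triangle-inequality manipulations, handling the two cases separately; this is just the specialization of Lemma~2.5 of \citep{hardt2016train} to the SGD update $G_{\lambda,\mathcal{L}}(\theta)=\theta-\lambda\nabla_\theta\mathcal{L}_\theta$ in play here.

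For the first case, suppose $G_t=G_t'$ is $\eta$-expansive. Then $\theta_{t+1}$ and $\theta_{t+1}'$ are the images of $\theta_t$ and $\theta_t'$ under one and the same map, so I would simply invoke the definition of $\eta$-expansiveness:
\[
\delta_{t+1}=\Vert G_t(\theta_t)-G_t(\theta_t')\Vert\le\eta\Vert\theta_t-\theta_t'\Vert=\eta\,\delta_t .
\]

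For the second case, suppose $G_t$ and $G_t'$ are both $\sigma$-bounded and $G_t$ is $\eta$-expansive. Here I would derive two separate upper bounds on $\delta_{t+1}$ and then take their minimum. The first bound discards expansiveness entirely: inserting $\theta_t$ and $\theta_t'$ and using that each single update step has norm at most $\sigma_t$,
\[
\delta_{t+1}\le\Vert\theta_{t+1}-\theta_t\Vert+\Vert\theta_t-\theta_t'\Vert+\Vert\theta_t'-\theta_{t+1}'\Vert\le\sigma_t+\delta_t+\sigma_t=\delta_t+2\sigma_t .
\]
The second bound retains the expansiveness: inserting the intermediate point $G_t(\theta_t')$ and applying $\eta$-expansiveness of $G_t$ to the first term and $\sigma$-boundedness of $G_t$ and $G_t'$ to the second,
\[
\delta_{t+1}\le\Vert G_t(\theta_t)-G_t(\theta_t')\Vert+\Vert G_t(\theta_t')-G_t'(\theta_t')\Vert\le\eta\,\delta_t+\big(\Vert G_t(\theta_t')-\theta_t'\Vert+\Vert\theta_t'-G_t'(\theta_t')\Vert\big)\le\eta\,\delta_t+2\sigma_t .
\]
Taking the smaller of the two right-hand sides gives $\delta_{t+1}\le\min(\eta,1)\,\delta_t+2\sigma_t$, and the base case $\delta_0=0$ is immediate from $\theta_0=\theta_0'$.

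I do not expect a genuine obstacle here, since the statement is purely metric and combinatorial. The one point that needs care is the choice of intermediate point in the triangle inequality for the second bound: inserting $G_t(\theta_t')$ rather than $\theta_t$ or $\theta_t'$ is precisely what lets the $\eta$-expansiveness of $G_t$ act on the full displacement $\theta_t-\theta_t'$, whereas $\sigma$-boundedness only ever controls a single update step, so the mismatch between the two operators $G_t$ and $G_t'$ must be absorbed as the sum of two separate one-step bounds — which is where the $2\sigma_t$ term comes from.
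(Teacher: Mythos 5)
Your proof is correct and complete: the two-case analysis, with the triangle inequality through $\theta_t,\theta_t'$ for the $\delta_t+2\sigma_t$ bound and the insertion of $G_t(\theta_t')$ to let expansiveness act on the full displacement for the $\eta\delta_t+2\sigma_t$ bound, is exactly the standard argument. The paper itself does not reprove this lemma — it simply cites Lemma~2.5 of \citep{hardt2016train} — and your argument coincides with the proof given in that reference, so there is nothing to reconcile.
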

	
	\begin{lemma}\label{appendix:stability_Lipschitz}
		(Lipschitz Continuity) Assume $\mathcal{L}_\theta$ is $L$-Lipschitz, the gradient update $G_{\lambda, \mathcal{L}}$ is $(\lambda L)$-bounded.
	\end{lemma}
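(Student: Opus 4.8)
The plan is to reduce the claim to the elementary fact that, for a differentiable function, being $L$-Lipschitz is equivalent to having gradient norm bounded by $L$ everywhere. Under the categorical parameterization of Section~\ref{sec:stability}, $\mathcal{L}_\theta$ is differentiable in $\theta$ (a softmax composed with a linear map and a logarithm, all smooth on the relevant domain), so from $L$-Lipschitz continuity of $\mathcal{L}_\theta$ one obtains $\sup_\theta \Vert \nabla_\theta \mathcal{L}_\theta \Vert \le L$. This is the only analytic input needed; in the concrete instance of Proposition~\ref{prop:lipschitz} one would take $L = kl$, but since the lemma is stated abstractly I would keep $L$ generic.

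Next I would simply unfold the definition of $\sigma$-boundedness for the SGD update $G_{\lambda,\mathcal{L}}(\theta) = \theta - \lambda \nabla_\theta \mathcal{L}_\theta$: the displacement produced by one step is
\begin{equation}
\Vert G_{\lambda,\mathcal{L}}(\theta) - \theta \Vert = \Vert \lambda \nabla_\theta \mathcal{L}_\theta \Vert = \lambda \Vert \nabla_\theta \mathcal{L}_\theta \Vert \le \lambda L ,
\end{equation}
and this bound holds uniformly in $\theta$, which is exactly the assertion that $G_{\lambda,\mathcal{L}}$ is $(\lambda L)$-bounded. (Here I read the displayed Definition of $\sigma$-boundedness in the intended way, as a bound on the step size $\Vert \theta - G_{\lambda,\mathcal{L}}(\theta)\Vert$ rather than literally on $\Vert\theta\Vert$ itself.)

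There is essentially no obstacle: the argument is a one-line computation once the Lipschitz--to--bounded-gradient equivalence is invoked, and it mirrors the analogous statement in \citep{hardt2016train}. The only points worth a sentence of care are that the gradient bound must be uniform over $\theta$ — which it is, since $L$-Lipschitz continuity yields a gradient bound at every point — and that, since we run SGD on the per-example loss $\mathcal{L}_\theta(s,a)$, it is this per-$(s,a)$ loss that must be $L$-Lipschitz, which is precisely what Proposition~\ref{prop:lipschitz} supplies. This lemma then combines with the expansiveness estimates and the Growth Recursion (Lemma~\ref{appendix:stability_Recursion}) to control $\delta_t$ and ultimately establish the uniform-stability bound of Theorem~\ref{theorem:lipschitz}.
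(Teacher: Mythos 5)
Your proof is correct and is essentially the paper's own argument: the paper's proof is exactly the one-line computation $\Vert \theta - G_{\lambda,\mathcal{L}}(\theta)\Vert = \Vert \lambda\nabla_\theta \mathcal{L}_\theta\Vert \le \lambda L$, with the bounded-gradient consequence of Lipschitz continuity used implicitly. Your reading of the $\sigma$-boundedness definition as a bound on the step $\Vert\theta - G_{\lambda,\mathcal{L}}(\theta)\Vert$ (matching \citep{hardt2016train}) is also the intended one.
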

	
	\begin{proof}
		$\Vert \theta -  G_{\lambda, \mathcal{L}}(\theta) \Vert = \Vert \lambda \nabla_{\theta} \mathcal{L}_\theta \Vert \le \lambda L$
	\end{proof}
	
	\begin{lemma}\label{appendix:stability_Smoothness}
		(Lipschitz Smoothness and Convex) Assume $\mathcal{L}_\theta$ is $\beta$-smooth and convex, then for any $\lambda\le\frac{2}{\beta}$, the gradient update $G_{\lambda, \mathcal{L}}$ is 1-expansive.
	\end{lemma}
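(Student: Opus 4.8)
The plan is to combine the two hypotheses — that $\mathcal{L}_\theta$ is $\beta$-smooth and that it is convex (the latter being exactly the convexity established in Proposition~\ref{prop:lipschitz}) — and then to control the one-step gradient map $G_{\lambda,\mathcal{L}}(\theta)=\theta-\lambda\nabla_\theta\mathcal{L}_\theta$ by expanding a squared Euclidean norm. The key ingredient I would invoke is the co-coercivity (Baillon--Haddad) property of the gradient of a convex $\beta$-smooth function: for all parameter vectors $v,w$,
\[
\langle \nabla_v\mathcal{L}_v - \nabla_w\mathcal{L}_w,\ v-w\rangle \ \ge\ \tfrac{1}{\beta}\,\Vert \nabla_v\mathcal{L}_v - \nabla_w\mathcal{L}_w\Vert^2 .
\]
This is the one nontrivial step, and it is where both assumptions are used together: smoothness alone only bounds the Lipschitz constant of the gradient, convexity alone only gives monotonicity, but jointly they upgrade monotonicity to this sharper quantitative inequality. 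If a self-contained derivation is wanted rather than a citation, it follows by applying the standard descent-lemma quadratic upper bound for $\beta$-smooth functions to the two auxiliary convex functions $u\mapsto \mathcal{L}_u-\langle\nabla_w\mathcal{L}_w,u\rangle$ and $u\mapsto \mathcal{L}_u-\langle\nabla_v\mathcal{L}_v,u\rangle$ at their respective minimizers and summing the two resulting estimates.

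Given that inequality, the rest is a one-line algebraic expansion. For any $v,w$ I would write
\[
\Vert G_{\lambda,\mathcal{L}}(v)-G_{\lambda,\mathcal{L}}(w)\Vert^2 = \Vert v-w\Vert^2 - 2\lambda\langle \nabla_v\mathcal{L}_v-\nabla_w\mathcal{L}_w,\ v-w\rangle + \lambda^2\Vert\nabla_v\mathcal{L}_v-\nabla_w\mathcal{L}_w\Vert^2 ,
\]
and substitute the co-coercivity bound into the cross term to get
\[
\Vert G_{\lambda,\mathcal{L}}(v)-G_{\lambda,\mathcal{L}}(w)\Vert^2 \ \le\ \Vert v-w\Vert^2 - \lambda\Big(\tfrac{2}{\beta}-\lambda\Big)\Vert\nabla_v\mathcal{L}_v-\nabla_w\mathcal{L}_w\Vert^2 .
\]
For any step size $\lambda\le 2/\beta$ the coefficient $\lambda(2/\beta-\lambda)$ is nonnegative, so the correction term is $\le 0$ and hence $\Vert G_{\lambda,\mathcal{L}}(v)-G_{\lambda,\mathcal{L}}(w)\Vert\le\Vert v-w\Vert$. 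Taking the supremum of the ratio $\Vert G_{\lambda,\mathcal{L}}(v)-G_{\lambda,\mathcal{L}}(w)\Vert/\Vert v-w\Vert$ over $v\neq w$ gives the constant $1$, i.e.\ $G_{\lambda,\mathcal{L}}$ is $1$-expansive; concretely one may take $\beta=kl^2$ from Proposition~\ref{prop:lipschitz}.

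The main (indeed only) obstacle is the co-coercivity step; since $\mathcal{L}_\theta$ is genuinely convex by Proposition~\ref{prop:lipschitz}, this is not a real gap but a classical fact that can be cited from \citep{hardt2016train} or reproduced via the descent-lemma argument above. I would also remark for context that \emph{without} convexity one obtains only $(1+\lambda\beta)$-expansiveness, which holds for any step size; that weaker branch is the one fed into the other case of the recursion in Lemma~\ref{appendix:stability_Recursion}, whereas here the convexity of the categorical distributional loss is precisely what buys the sharp constant $1$ needed for the stability bound of Theorem~\ref{theorem:lipschitz}.
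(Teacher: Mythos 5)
Your proof is correct and is essentially the same argument as the paper's: the paper simply defers to Lemma 3.7 of \citep{hardt2016train}, whose proof is exactly the co-coercivity expansion of $\Vert G_{\lambda,\mathcal{L}}(v)-G_{\lambda,\mathcal{L}}(w)\Vert^2$ that you spell out. You also rightly flag that convexity (supplied by Proposition~\ref{prop:lipschitz}) is needed in addition to the $\beta$-smoothness stated in the lemma, since smoothness alone only yields $(1+\lambda\beta)$-expansiveness.
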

	\begin{proof}
		Please refer to Lemma 3.7 in \cite{hardt2016train} for the proof.
	\end{proof}
	
	Based on all the results above, we start to prove Theorem~\ref{theorem:lipschitz}. Our proof is largely based on \cite{hardt2016train}, but it is applicable in distributional RL settings and considering desirable properties of histogram distributional loss. According to Proposition~\ref{prop:lipschitz}, we attain that $\mathcal{L}_\theta$ is $kl$-Lipschitz as well as $kl^2$-smooth, and thus based on Lemma~\ref{appendix:stability_Lipschitz} and Lemma~\ref{appendix:stability_Smoothness}, we have	$G_{\lambda, \mathcal{L}}$ is $(\lambda kl)$-bounded, and 1-expansive if $\lambda\le \frac{2}{kl^2}$. In the step $t$, SGD selects samples that are both in $\mathcal{D}$ and $\mathcal{D}^\prime$, with probability $1-\frac{1}{n}$. In this case, $G_t=G_t^\prime$, and thus $\delta_{t+1}\le \delta_t$ as $G_t$ is 1-expansive based on Lemma~\ref{appendix:stability_Recursion}. The other case is that samples selected are different with probability $\frac{1}{n}$, where $\delta_{t+1}\le \delta_t + 2\lambda_t kl$ based on Lemma~\ref{appendix:stability_Recursion}. Thus, if $\lambda_t \le \frac{2}{kl^2}$, for each state $s$ and action $a$, we have:
	\begin{equation}\begin{aligned}
			\mathbb{E}\left|\mathcal{L}_{\theta_T}(s, a) - \mathcal{L}_{\theta_T^\prime}(s, a)\right| &\le kl \mathbb{E}\left[\delta_T\right], \ \text{where} \ \delta_T = \Vert \theta_T - \theta^\prime_T \Vert\\
			&\le kl\left((1-\frac{1}{n}) \mathbb{E}\left[\delta_{T-1}\right] + \frac{1}{n}\mathbb{E}\left[\delta_{T-1}\right]  +  \frac{2\lambda_{T-1} kl}{n}\right)\\
			&= kl \left(\mathbb{E}\left[\delta_{T-1}\right] + \frac{2\lambda_{T-1} kl}{n} \right) \\
			&=kl \left(\mathbb{E}\left[\delta_{0}\right] + \sum_{t=0}^{T-1} \frac{2\lambda_t kl}{n} \right)\\
			&\le \frac{2k^2l^2}{n} \sum_{t=0}^{T-1} \frac{2}{kl^2}\\
			&=\frac{4kT}{n}
	\end{aligned}\end{equation}
	Since this bound holds for all $\mathcal{D}$, $\mathcal{D}^\prime$ and $s, a$, we attain the uniform stability in Definition~\ref{def:stability} for our categorical distributional loss applied in distributional RL.
	
	Define the population risk as:
	$$R\left[\theta\right]=\mathbb{E}_{x}\mathcal{L}_{\theta}(s, a)$$
	and the empirical risk as:
	$$R_S\left[\theta\right]=\frac{1}{n} \sum_{i=1}^{n}\mathcal{L}_{\theta}(s_i, a_i)$$
	According to Theorem 2.2 in \cite{hardt2016train}, if an algorithm $\mathcal{M}$ is $\epsilon_{\text{stab}}$-uniformly stable, then the generalization gap is $\epsilon_{\text{stab}}$-bounded, i.e., 
	$$\left|\mathbb{E}_{S, A}\left[R_{S}[\mathcal{M}(\mathcal{D})]-R[\mathcal{M}(\mathcal{D}^\prime)]\right]\right| \leq \epsilon_{\text{stab}}$$
\end{proof}

\section{Proof of Proposition~\ref{prop:acceleration}}\label{appendix:acceleration_lemma}
\begin{equation}\begin{aligned}
		\mathbb{E}_{(s, a)\sim \rho^\pi}\left[\|\nabla \mathcal{L}_\theta(p^{s, a}, f_\theta^{s, a}))-\nabla G(\theta)\|^{2}\right]\le (1-\epsilon)^2\sigma^{2} + \epsilon^2 \kappa \sigma^{2}.
\end{aligned}\end{equation}
\begin{proof}
	As we know that $p^{s, a}(x)=(1-\epsilon)p_E^{s, a} + \epsilon \mu^{s, a}(x)$ and we use KL divergence in $\mathcal{L}_\theta$, then we have:
	$$\nabla \mathcal{L}_\theta(p^{s, a}, f_\theta^{s, a})=(1-\epsilon)\nabla \mathcal{L}_\theta(p_E^{s, a} , f_\theta^{s, a}) + \epsilon \nabla \mathcal{L}_\theta(\mu^{s, a}, f_\theta^{s, a}) $$
	Therefore, 
	\begin{equation}\begin{aligned}
			&\mathbb{E}_{(s, a)\sim \rho^\pi}\left[\|\nabla \mathcal{L}_\theta(p^{s, a}, f_\theta^{s, a}))-\nabla G(\theta)\|^{2}\right]\\
			&\le \mathbb{E}_{(s, a)\sim \rho^\pi}\left[ (1-\epsilon)^2\|\nabla \mathcal{L}_\theta(p_E^{s, a}, f_\theta^{s, a}))-\nabla G(\theta)\|^{2} + \epsilon^2 \|\nabla \mathcal{L}_\theta(\mu^{s, a}, f_\theta^{s, a}))-\nabla G(\theta)\|^{2} \right]\\
			&= (1-\epsilon)^2\sigma^{2} + \epsilon^2 \kappa \sigma^{2},
	\end{aligned}\end{equation}
	where the first inequality uses the triangle inequality of norm, i.e., $\|(1-\epsilon) \mathbf{a}+\epsilon \mathbf{b}\|^{2} \leq(1-\epsilon)^2\|\mathbf{a}\|^{2}+\epsilon^2\|\mathbf{b}\|^{2}$, and the last equality uses the definition of the variance of $\mathcal{L}_\theta(p_E^{s, a} , f_\theta^{s, a})$ and $\mathcal{L}_\theta(\mu^{s, a}, f_\theta^{s, a})$.
\end{proof}

\section{Proof of Theorem~\ref{theorem:acceleration}}\label{appendix:acceleration_theorm}
\begin{proof}
	\textbf{Classical RL} (1) If we only consider the expectation of $Z^\pi(s, a)$, we use the information $p_E^{s, a}$ to construct the loss function. As $\mathcal{L}_\theta(p_E^{s, a} , q_\theta^{s, a})$ is $kl^2$-smooth, we have
	\begin{equation}\begin{aligned}
			G(\theta_{t+1}) -G(\theta_{t}) &\le \Braket{\nabla G(\theta_{t}), \theta_{t+1}-\theta_{t}} + \frac{kl^2}{2} \Vert \theta_{t+1}-\theta_{t} \Vert^2 \\
			&= -\lambda \Braket{\nabla G(\theta_{t}), \nabla \mathcal{L}_\theta(p_E^{s, a} , f_\theta^{s, a})} + \frac{kl^2\lambda^2}{2} \Vert \nabla \mathcal{L}_\theta(p_E^{s, a} , f_\theta^{s, a}) \Vert^2
	\end{aligned}\end{equation}
	where the inequality is according to the definition of Lipschitz-smoothness, and the last equation is based on the updating rule of $\theta$. Next, we take the expectation on both sides,
	\begin{equation}\begin{aligned}
			&\mathbb{E}\left[G(\theta_{t+1}) -G(\theta_{t})\right]\\ 
			&\le -\lambda \mathbb{E}\left[\Vert \nabla G(\theta_{t}) \Vert^2\right] + \frac{kl^2\lambda^2}{2} \mathbb{E}\left[\Vert \nabla \mathcal{L}_\theta(p_E^{s, a}, f_\theta^{s, a}) - \nabla G(\theta_{t}) +\nabla G(\theta_{t}) \Vert^2\right] \\
			&\le -\lambda \mathbb{E}\left[\Vert \nabla G(\theta_{t}) \Vert^2\right] + \frac{kl^2\lambda^2}{2} \mathbb{E}\left[\Vert \nabla \mathcal{L}_\theta(p_E^{s, a} , f_\theta^{s, a}) - \nabla G(\theta_{t}) \Vert^2\right] + \frac{kl^2\lambda^2}{2} \mathbb{E}\left[\Vert \nabla G(\theta_{t}) \Vert^2\right] \\
			&=  \frac{\lambda (kl^2\lambda -2)}{2} \mathbb{E}\left[\Vert \nabla G(\theta_{t}) \Vert^2\right] + \frac{kl^2\lambda^2}{2} \sigma^2\\
			&\le -\frac{\lambda}{2}  \mathbb{E}\left[\Vert \nabla G(\theta_{t}) \Vert^2\right] + \frac{kl^2\lambda^2}{2} \sigma^2
	\end{aligned}\end{equation}
	where the first two inequalities hold because $\nabla G(\theta)=\mathbb{E}\left[\nabla \mathcal{L}_\theta\right]$ and the last inequality comes from $\lambda \le \frac{1}{kl^2}$. Through the summation, we obtain that 
	$$\mathbb{E}\left[G(\theta_{T}) -G(\theta_{0})\right]\le -\frac{\lambda}{2} \sum_{t=0}^{T-1} \mathbb{E}\left[\Vert \nabla G(\theta_{t}) \Vert^2\right] + \frac{kl^2\lambda^2 T}{2} \sigma^2$$
	We let $\mathbb{E}\left[G(\theta_{T})\right]=0$, we have
	$$\frac{1}{T}\sum_{t=0}^{T-1} \mathbb{E}\left[\Vert \nabla G(\theta_{t}) \Vert^2\right] \le  \frac{2G(\theta_{0})}{\lambda T} + kl^2\lambda \sigma^2$$
	By setting $\lambda\le\frac{\tau^2}{2kl^2\sigma^2}$ (simultaneously $\lambda \le \frac{1}{kl^2}$, i.e., $\lambda \le \frac{1}{kl^2} \min \{1, \frac{\tau^2}{2 \sigma^2}\} $) and $T=\frac{4G(\theta_{0})}{\lambda \tau^2}$, we can have $\frac{1}{T}\sum_{t=0}^{T-1} \mathbb{E}\left[\Vert \nabla G(\theta_{t}) \Vert^2\right] \le \tau^2$, implying that the degenerated loss function based on the expectation $p_E^{s, a}$ can achieve $\tau$-stationary point if the sample complexity $T=O(\frac{1}{\tau^4})$.

	\noindent \textbf{Distributional RL (2).} We are still based on the $kl^2$-smoothness of $\mathcal{L}(p^{s, a}, f^{s, a}_\theta)$.
	\begin{equation}\begin{aligned}
			G(\theta_{t+1}) -G(\theta_{t})&\le \Braket{\nabla G(\theta_{t}), \theta_{t+1}-\theta_{t}} + \frac{kl^2}{2} \Vert \theta_{t+1}-\theta_{t} \Vert^2 \\
			&= -\lambda \Braket{\nabla G(\theta_{t}), \nabla \mathcal{L}_\theta(p^{s, a}, f_\theta^{s, a})} + \frac{kl^2\lambda^2}{2} \Vert \nabla \mathcal{L}_\theta(p^{s, a}, f_\theta^{s, a}) \Vert^2\\
			&= -\frac{\lambda}{2} \Vert \nabla G(\theta_{t}) \Vert^2 + \frac{\lambda}{2} \Vert \nabla G(\theta_{t}) - \nabla \mathcal{L}_\theta(p^{s, a}, f_\theta^{s, a}) \Vert^2 + \frac{\lambda(kl^2 \lambda -1)}{2} \Vert \nabla \mathcal{L}_\theta(p^{s, a}, f_\theta^{s, a}) \Vert^2\\
			&\le -\frac{\lambda}{2} \Vert \nabla G(\theta_{t}) \Vert^2 + \frac{\lambda}{2} \Vert \nabla G(\theta_{t}) - \nabla \mathcal{L}_\theta(p^{s, a}, f_\theta^{s, a}) \Vert^2
	\end{aligned}\end{equation}
	where the second equation is based on $\langle\mathbf{a},-\mathbf{b}\rangle=\frac{1}{2}\left(\|\mathbf{a}-\mathbf{b}\|^{2}-\|\mathbf{a}\|^{2}-\|\mathbf{b}\|^{2}\right)$, and the last inequality is according to $\lambda \le \frac{1}{kl^2}$. After taking the expectation, we have
	\begin{equation}\begin{aligned}
			\mathbb{E}\left[G(\theta_{t+1}) -G(\theta_{t})\right]
			&\le -\frac{\lambda}{2} \mathbb{E}\left[\Vert \nabla G(\theta_{t}) \Vert^2\right] + \frac{\lambda}{2} \mathbb{E}\left[\Vert \nabla G(\theta_{t}) - \nabla \mathcal{L}_\theta(p^{s, a}, f_\theta^{s, a}) \Vert^2 \right] \\
			&\le -\frac{\lambda}{2} \mathbb{E}\left[\Vert \nabla G(\theta_{t}) \Vert^2\right] + \frac{\lambda}{2}\left( (1-\epsilon)^2\sigma^{2} + \epsilon^2 \kappa \sigma^{2} \right)
	\end{aligned}\end{equation}
	where the last inequality is based on Proposition~\ref{prop:acceleration}. We take the summation, and therefore, 
	$$\mathbb{E}\left[G(\theta_{T}) -G(\theta_{0})\right]\le-\frac{\lambda}{2}\sum_{t=0}^{T-1} \mathbb{E}\left[\Vert \nabla G(\theta_{t}) \Vert^2\right] + \frac{T\lambda}{2}\left( (1-\epsilon)^2\sigma^{2} + \epsilon^2 \kappa \sigma^{2} \right)$$
	We let $\mathbb{E}\left[G(\theta_{T})\right]=0$ and $\epsilon=\frac{1}{1+\kappa}$, then,
	\begin{equation}\begin{aligned}
			\frac{1}{T}\sum_{t=0}^{T-1} \mathbb{E}\left[\Vert \nabla G(\theta_{t}) \Vert^2\right] &\le \frac{2 G(\theta_{0})}{\lambda T} + (1-\epsilon)^2\sigma^{2} + \epsilon^2 \kappa \sigma^{2}\\
			&=\frac{2 G(\theta_{0})}{\lambda T} + \frac{2\kappa^2}{(1+\kappa)^2} \sigma^{2} \\
			&\le \frac{2 G(\theta_{0})}{\lambda T} + 2\kappa^2 \sigma^{2}
	\end{aligned}\end{equation}
	If $\kappa \le \frac{\tau}{2\sigma}$ and let $T=\frac{4 G(\theta_0)}{\lambda \tau^2}$, this leads to $\frac{1}{T}\sum_{t=0}^{T-1} \mathbb{E}\left[\Vert \nabla G(\theta_{t}) \Vert^2\right]\le \tau^2$, i.e., $\tau$-stationary point, with the sample complexity as $O(\frac{1}{\tau^2})$. If $\kappa > \frac{\tau}{2\sigma}$, we set $T=\frac{G(\theta_0)}{\lambda \kappa^2 \sigma^2}$. This implies that $\frac{1}{T}\sum_{t=0}^{T-1} \mathbb{E}\left[\Vert \nabla G(\theta_{t}) \Vert^2\right] \le 4\kappa^2\sigma^2$, which can only achieve $2\kappa \sigma$-stationary point. Putting two cases together, we conclude that distributional RL can achieve $\max \{\tau, 2\kappa\sigma\}$-stationary point (since $\tau$ can be pre-given, while $2 \kappa \sigma$ is determined by the environment.)
\end{proof}

\begin{table}[t!]
	\caption{Hyper-parameters Sheet.}
	\label{table:hyperparameters}
	\centering
	\begin{tabular}{lc}\toprule[2pt]
		\specialrule{0pt}{1pt}{1pt}
		Hyperparameter & Value  \\ 
		\hline\specialrule{0pt}{1pt}{1pt}
		\textit{Shared}&~\\
		\quad Policy network learning rate  & 3e-4  \\\specialrule{0pt}{1pt}{1pt}
		\quad (Quantile / Categorical) Value network learning rate & 3e-4  \\\specialrule{0pt}{1pt}{1pt}
		\quad Optimization  & Adam \\\specialrule{0pt}{1pt}{1pt}
		\quad Discount factor  & 0.99 \\\specialrule{0pt}{1pt}{1pt}
		\quad Target smoothing  & 5e-3 \\\specialrule{0pt}{1pt}{1pt}
		\quad Batch size  & 256 \\\specialrule{0pt}{1pt}{1pt}
		\quad Replay buffer size & 1e6 \\\specialrule{0pt}{1pt}{1pt}
		\quad Minimum steps before training  & 1e4 \\\specialrule{0pt}{1pt}{1pt}
		\hline 
		\textit{DAC~(IQN)}&~\\
		\quad Number of quantile fractions ($N$) & 32  \\\specialrule{0pt}{1pt}{1pt}
		\quad Quantile fraction embedding size     & 64   \\\specialrule{0pt}{1pt}{1pt}
		\quad Huber regression threshold    & 1   \\\specialrule{0pt}{1pt}{1pt}
		\hline 
		\textit{DAC~(C51)}&~\\
		\quad Number of Atoms ($k$) & 51  \\\specialrule{0pt}{1pt}{1pt}
		\specialrule{0pt}{1pt}{1pt}\bottomrule[2pt]
	\end{tabular}
	
	\begin{tabular}{lcc}\toprule[2pt]
		\specialrule{0pt}{1pt}{1pt}
		Hyperparameter & $l_k$ for C51 & Max episode lenght  \\ 
		\hline\specialrule{0pt}{1pt}{1pt}
		Walker2d-v2&500 &1000 \\
		Swimmer-v2  & 160&~1000 \\\specialrule{0pt}{1pt}{1pt}
		Reacher-v2  & 500&~1000 \\\specialrule{0pt}{1pt}{1pt}
		Ant-v2 & 500 &1000 \\\specialrule{0pt}{1pt}{1pt}
		HalfCheetah-v2  & 10,000 &1000\\\specialrule{0pt}{1pt}{1pt}
		Humanoid-v2  & 5,000 &1000 \\\specialrule{0pt}{1pt}{1pt}
		HumanoidStandup-v2  & 15,000 &1000 \\\specialrule{0pt}{1pt}{1pt}
		BipedalWalkerHardcore-v2  & 50 &2000 \\\specialrule{0pt}{1pt}{1pt}
		\specialrule{0pt}{1pt}{1pt}\bottomrule[2pt]
	\end{tabular}
	
\end{table}


\section{Implementation Details}\label{appendix:implementation}

Our implementation is directly adapted from the source code in \cite{ma2020dsac}. For DAC~(IQN), we consider the quantile regression for the distribution estimation on the critic loss. Instead of using fixed quantiles in QR-DQN~\cite{dabney2017distributional}, we leverage the quantile fraction generation based on IQN~\cite{dabney2018implicit} that uniformly samples quantile fractions in order to approximate the full quantile function. In particular, we fix the number of quantile fractions as $N$ and keep them ascending. Besides, we adapt the sampling as $\tau_0=0, \tau_i=\epsilon_i/\sum_{i=0}^{N-1}$, where $\epsilon_i \in U[0, 1], i=1,...,N$.

\subsection{Hyper-parameters and Network structure}

We adopt the same hyper-parameters listed in Table~\ref{table:hyperparameters} and network structure as in the original distributional SAC paper~\cite{ma2020dsac}.

\subsection{Best $l_k$ for DAC~(C51)}

As suggested in Table~\ref{table:hyperparameters}, after a line search for the hyperparameter tuning, we select $l_k$ as 500, 10,000, 15,000, 160, 50, 5,000, 500, 500 for ant, halfcheetah, humanoidstand, swimmer, bipedalwalkerhardcore, humanoid, walker2d and reacher, respectively.

%

\section{Equivalence between the loss function in Theorem~\ref{theorem:acceleration} and mean squared loss in Neural FQI}\label{appendix:equivalence}

\begin{prop} (Equivalence between  the first term in Decomposed Neural FZI and Neural FQI)
	In Neural FZI, if the function class $\{Z_\theta: \theta \in \Theta\}$ is sufficiently large such that it contains the target $\{Y_i\}_{i=1}^n$. As $\Delta \rightarrow 0$, minimizing \textbf{the first term} in implies
	\begin{equation}\begin{aligned}\label{eq:decomposition_firstterm}
			P(Z^{k+1}_\theta(s, a)=\mathcal{T}^{\text{opt}} Q^k_{\theta^*}(s, a) ) = 1, \quad  \forall k.
	\end{aligned}\end{equation}	
\end{prop}

\begin{proof}
	
	Firstly, we define the distributional Bellman optimality operator $\mathfrak{T}^{\text{opt}}$ as follows:
	
	\begin{equation}\begin{aligned}\label{eq:distributionaloptimialityoperator}
			&\mathfrak{T}^{\text{opt}} Z(s, a) \stackrel{D}{=} R(s, a)+\gamma Z\left(S^{\prime}, a^*\right), S^{\prime} \sim P(\cdot \mid s, a), \quad a^*=\underset{a^{\prime}}{\operatorname{argmax}} \mathbb{E}\left[Z\left(S^{\prime}, a^{\prime}\right)\right]
	\end{aligned}\end{equation}
	
	If $\{Z_\theta: \theta \in \Theta\}$ is sufficiently large enough such that it contains $\mathfrak{T}^{\text{opt}}Z_{\theta^*}$, then optimizing Neural FZI in Eq.~\ref{eq:Neural_Z_fitting} leads to $Z_\theta^{k+1}=\mathfrak{T}^{\text{opt}}Z_{\theta^*}$.
	
	We apply the action-value density function decomposition on the target histogram function $\widehat{p}^{s, a}(x)$. Consider the parameterized histogram density function $h_\theta$ and denote $h^E_\theta / \Delta$ as the bin height in the bin $\Delta_E$, under the KL divergence between the first histogram function $ \mathds{1}(x\in \Delta_E)$ with $h_\theta(x)$, the objective function is simplified as
	\begin{equation}\begin{aligned}
			D_{\text{KL}}(\mathds{1}(x\in \Delta_E)/\Delta, h_\theta(x))&\propto - \int_{x \in \Delta_E} \frac{1}{\Delta} \log \frac{h_\theta^E}{\Delta} dx \propto - \log h_\theta^E
	\end{aligned}\end{equation}
	Since $\{Z_\theta: \theta \in \Theta\}$ is sufficiently large enough, the KL minimizer would be $\widehat{h}_\theta = \mathds{1}(x \in \Delta_E) / \Delta$ in expectation. Then, $\arg\min_{h_\theta} \lim_{\Delta \rightarrow 0} D_{\text{KL}}(\mathds{1}(x\in \Delta_E)/\Delta, h_\theta(x)) = \delta_{\mathbb{E}\left[Z^{\text{target}}(s,a)\right]}$, where $\delta_{\mathbb{E}\left[Z^{\text{target}}(s,a)\right]}$ is a Dirac Delta function centered at $\mathbb{E}\left[Z^{\text{target}}(s,a)\right]$ and can be viewed as a generalized probability density function. This also applies from $h_\theta$ to $Z_\theta$. In Neural FZI, we have $Z^{\text{target}}=\mathfrak{T}^{\text{opt}}Z_{\theta^*}$. According to the definition of the Dirac function, as $\Delta \rightarrow 0$, we attain
	\begin{equation}\begin{aligned}
			P(Z^{k+1}_\theta(s, a)=\mathbb{E}\left[\mathfrak{T}^{\text{opt}}Z^k_{\theta^*}(s, a)\right]) = 1
	\end{aligned}\end{equation}
	Due to the linearity of expectation analyzed in Lemma 4 of \cite{bellemare2017distributional}, we have
	\begin{equation}\begin{aligned}
			\mathbb{E}\left[\mathfrak{T}^{\text{opt}}Z^k_{\theta^*}(s, a)\right] &= \mathfrak{T}^{\text{opt}}  \mathbb{E}\left[Z^k_{\theta^*}(s, a)\right]= \mathcal{T}^{\text{opt}} Q^k_{\theta^*}(s, a) 
	\end{aligned}\end{equation}	
	Finally, we obtain:
	\begin{equation}\begin{aligned}
			P(Z^{k+1}_\theta(s, a)=\mathcal{T}^{\text{opt}} Q^k_{\theta^*}(s, a) ) = 1 \quad \text{as} \ \ \Delta \rightarrow 0
	\end{aligned}\end{equation}	
\end{proof}

\section{Experimental Results on Acceleration Effects of Distributional RL}\label{appendix:exp_acceleration_kappa}

\paragraph{Same Architecture.} For a fair comparison, we keep the same DAC network architecture and evaluate the gradient norms of DAC~(C51) and a variant of AC, which is optimized based on the expectation of the represented value distribution within the DAC implementation framework. Figure~\ref{fig:acceleration_kappa} suggests DAC~(C51) still enjoys smaller gradient norms than AC in this fair comparison setting.

\begin{figure}[htbp]
	\centering
	\includegraphics[width=1.0\textwidth,trim=0 0 0 0,clip]{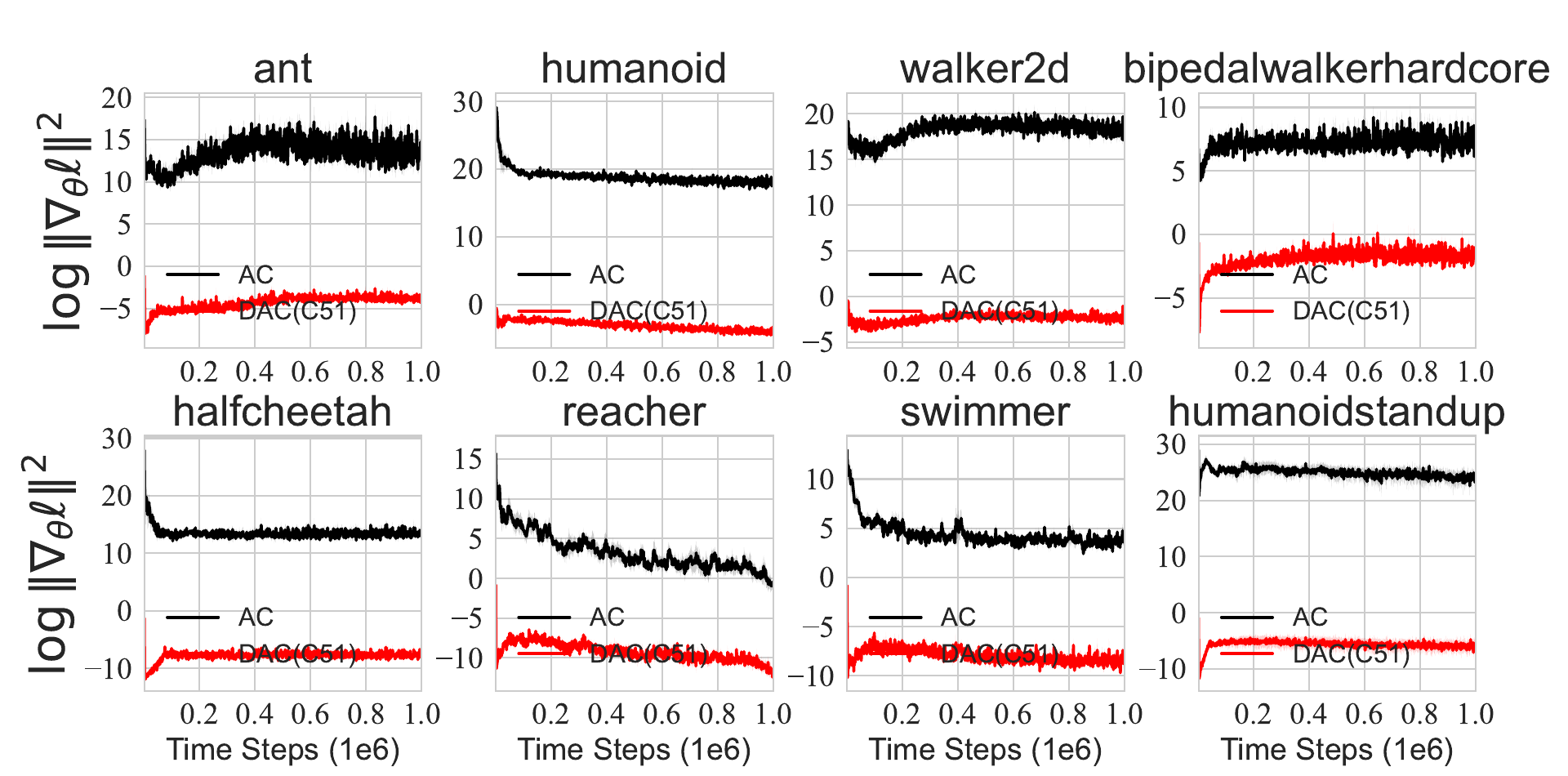}
	\caption{The critic gradient norms in the logarithmic scale during the training of AC and DAC~(C51) over five seeds on three MuJoCo games. \textbf{We keep the same DAC network architecture and evaluate based on the expectation of the represented value distribution}.}
	\label{fig:acceleration_kappa}
\end{figure}

\paragraph{Results under Return Density Decomposition}

We also provide gradient norms of both expectation and distribution based on the Return Density Function decomposition in Eq.~\ref{eq:decomposition}. Similar results can still be observed in Figure~\ref{fig:acceleration_kappacom}.

\begin{figure}[htbp]
	\centering
	\includegraphics[width=1.0\textwidth,trim=0 0 0 0,clip]{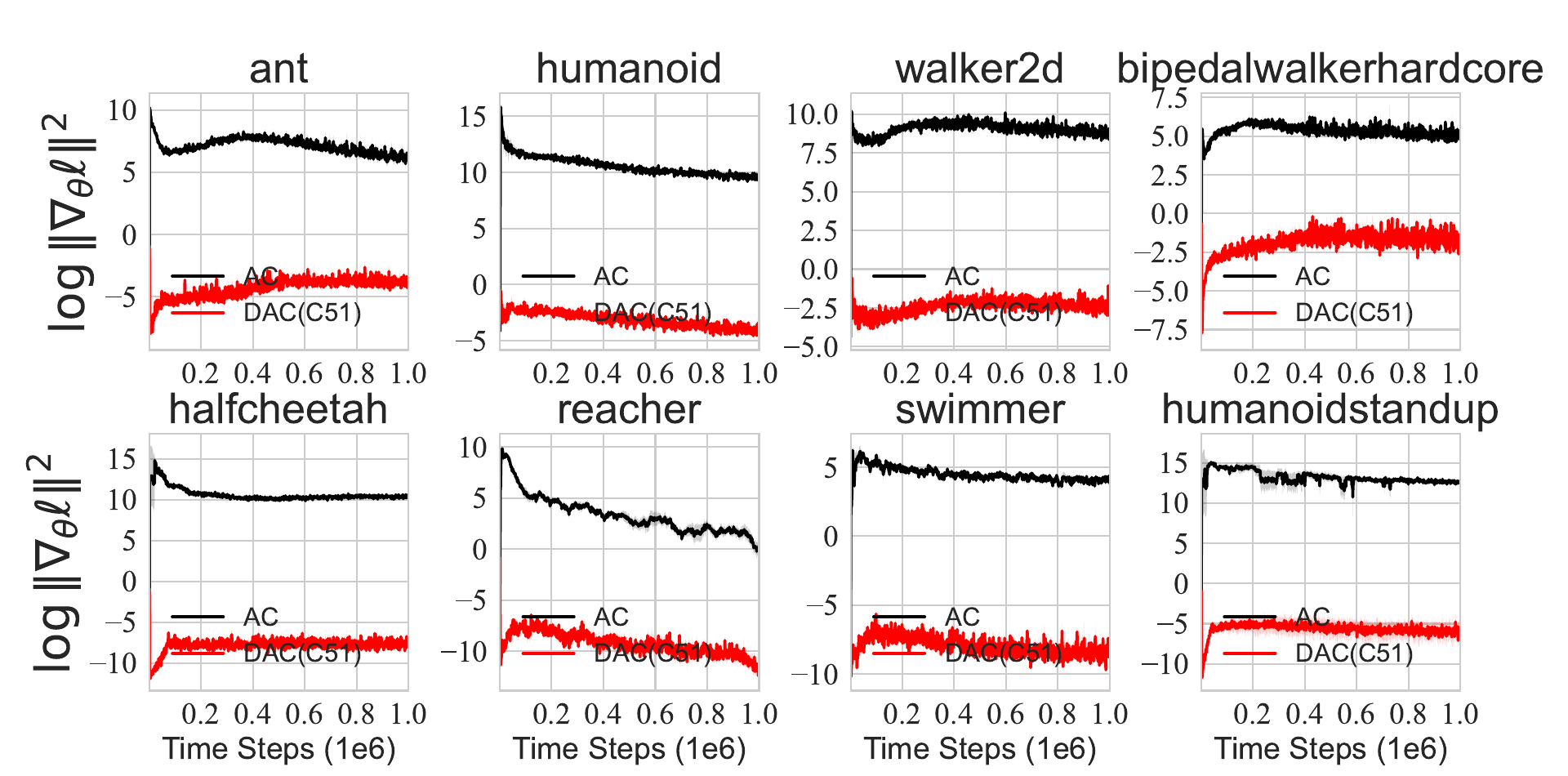}
	\caption{The critic gradient norms in the logarithmic scale during the training of AC and DAC~(C51) over five seeds on three MuJoCo games. \textbf{Results of AC is the expectation part calculated via the Return Density Function Decomposition}.}
	\label{fig:acceleration_kappacom}
\end{figure}

\end{document}